\documentclass{article}
\usepackage[utf8]{inputenc}
\usepackage{graphicx} 
\usepackage{amsmath}
\usepackage{graphicx}
\usepackage{scalerel}
\usepackage[margin=1in]{geometry}
\title{The Sample Complexity of Gradient Descent \\ in Stochastic Convex Optimization}
\author{%
  Roi Livni\\
  School of Electrical Engineering\\
  Tel Aviv University\\
  \texttt{rlivni@tauex.tau.ac.il} \\
}
\usepackage[utf8]{inputenc}
\usepackage[margin=1in]{geometry}

\usepackage{amsmath,amsfonts,amsthm,amssymb}
\usepackage{mathtools}

 \usepackage{mytimes}

\usepackage[numbers, sort&compress]{natbib}

\usepackage{thmtools}
\usepackage[hidelinks,colorlinks=true,linkcolor=blue,citecolor=blue]{hyperref}
\usepackage[capitalise]{cleveref}
\newcommand{\OS}[1]{{\mathcal O}^{(#1)}}
\newcommand{\OSS}{\mathcal{O}_{\mathcal{S}}}

\theoremstyle{plain}
\declaretheorem{theorem}
\declaretheorem[sibling=theorem]{lemma}
\declaretheorem[sibling=theorem]{corollary}
\declaretheorem{claim}

\declaretheorem[name=Open Question,sibling=theorem]{open}

\newtheorem*{theorem*}{Theorem}

\newcommand{\lbound}[1]{\Omega \left(\min\left\{#1,1\right\}\right)}
\newcommand{\ubound}[1]{O\left(#1\right)}

\declaretheoremstyle[
        spaceabove=\topsep, 
        spacebelow=\topsep, 
        bodyfont=\normalfont,
        notefont=\normalfont\bfseries,
        notebraces={}{},
        qed=$\blacksquare$, 
    ]{proofstyle}
\declaretheorem[style=proofstyle,numbered=no,name=Proof]{proof}

\usepackage{xcolor}
\usepackage[extdef=true]{delimset}
\usepackage{xspace}
\usepackage{crossreftools}

\pdfstringdefDisableCommands{%
    \let\Cref\crtCref
    \let\cref\crtcref
}

\crefname{claim}{Claim}{Claims}



\newcommand{\W}{\mathcal{W}}
\newcommand{\E}{\mathop{\mathbb{E}}}

\DeclarePairedDelimiter\floor{\lfloor}{\rfloor}

\newcommand{\wgd}{w_S^{\textrm{GD}}}

\renewcommand{\epsilon}{\varepsilon}

\newcommand{\f}[1]{f_{\scriptscriptstyle {(#1)}}}

\newcommand{\Z}{\mathcal{Z}}

\newcommand{\q}{q}

\newcommand{\labelthis}[1]{\stepcounter{equation}\tag{\theequation}\label{#1}}

\newcommand{\ignore}[1]{}

\newcommand{\V}{\mathcal{V}}
\renewcommand{\Z}{\mathcal{Z}}

\renewcommand{\O}{\mathcal{O}}

\renewcommand{\S}{\mathcal{S}}

\newcommand{\xz}[1]{x_#1}
\newcommand{\wz}[1]{w_#1}
\renewcommand{\u}[1]{\left(\wz{#1},\xz{#1}\right)}
\newcommand{\hz}[1]{\nabla h_z(\xz{#1})}
\newcommand{\oz}[1]{\O_{\SS_#1,t_#1,z}}
\newcommand{\g}[1]{\left(\oz{#1},\hz{#1}\right)}
\newcommand{\h}[1]{h_z(\xz{#1})}

\newcommand{\theboundproto}[1]{\frac{1}{#1\cdot 272\cdot 16^2} \cdot \min\left \{\frac{d}{1032m},1\right\}\cdot \min \left\{ \eta 
\sqrt{\min\{\floor{d^3/136},T\}}, 1\right\}}
\newcommand{\theboundpre}{\theboundproto{\sqrt{2}}}
\newcommand{\thebound}{\theboundproto{2}}
\renewcommand{\f}[1]{f(\wz{#1},z)}
\renewcommand{\ss}{\mathbf{s}}

\renewcommand{\S}{\mathcal{S}}
\renewcommand{\SS}{\mathbf{S}}
\begin{document}

\maketitle

\begin{abstract}
    We analyze the sample complexity of full-batch Gradient Descent (GD) in the setup of non-smooth Stochastic Convex Optimization. We show that the generalization error of GD, with  common choice of hyper-parameters, can be $\tilde \Theta(d/m + 1/\sqrt{m})$, where $d$ is the dimension and $m$ is the sample size. 
    This matches the sample complexity of \emph{worst-case} empirical risk minimizers. That means that, in contrast with other algorithms, GD has no advantage over naive ERMs. 
    Our bound follows from a new generalization bound that depends on both the dimension as well as the learning rate and number of iterations. Our bound also shows that, for general hyper-parameters, when the dimension is strictly larger than number of samples, $T=\Omega(1/\epsilon^4)$ iterations are necessary to avoid overfitting. This resolves an open problem by \citet*{schliserman2024dimension, amir2021sgd},  and improves over previous lower bounds that demonstrated that the sample size must be at least square root of the dimension.

\end{abstract}
\newpage
\section{Introduction}
Stochastic Convex Optimization (SCO) is a theoretical model that depicts a learner that minimizes a (Lipschitz) convex function, given finite noisy observations of the objective \cite{shalev2009stochastic}. While often considered simplistic, in recent years SCO has become a focus of theoretical research, partly, because of its importance to the study of first-order optimization methods.
But, also, it has become focus of study because it is one of few theoretical settings that exhibit \emph{overparameterized learning} . 
In more detail, classical learning theory often focuses on the tension between number of samples, or training data, and the complexity of the model to be learnt. A common wisdom of classical theories \cite{alon1997scale, vapnik2015uniform, blumer1989learnability, haussler2018probably} is that, to avoid overfitting, the complexity of a model should be adjusted in proportion to the amount of training data. However, recent advances in Machine Learning have challenged this viewpoint. Evidently \cite{neyshabur2014search,zhang2021understanding}, state-of-the-art algorithms generalize well but without, explicitly, controlling the capacity of the model to be learnt. In turn, today, it is one of the most emerging challenges, for learning theory, to understand learnability when the number of parameters in a learnt model exceeds the number of examples, and when, seemingly, nothing withholds the algorithm from overfitting.

Towards this, we look at SCO. In SCO, \citet*{shalev2009stochastic} showed how algorithms can overfit with \emph{dimension dependent} sample size. But, at the same time, it was known \cite{bousquet2002stability, zinkevich2003online} that there are algorithms that provably avoid overfitting with far fewer examples than dimensions. As such, SCO became a canonical model to study how a well-designed algorithm can avoid overfitting even when the number of examples is too small to guarantee generalization by an algorithmic-independent argument \cite{amir2021never,amir2021sgd, sekhari2021sgd, feldman2016generalization, shalev2009stochastic, attias2024information, livni2024information, schliserman2024dimension, amir2022thinking, carmon2023sample}.
A step towards understanding \emph{what} induces generalization is to identify \emph{which} algorithms generalize. Then, we can ask what yields the separation. Surprisingly, for many well-studied algorithms this question is not always answered.
%
Perhaps the simplest algorithm, whose sample complexity is not yet understood, is Gradient Descent (GD). And we turn to the basic question of the sample complexity of gradient descent.

While this question remained open, there have been several advancements and intermediate answers: The first, dimension independent, generalization bound was given by \citet*{bassily2020stability} that provided stability bounds \cite{bousquet2002stability}. The result of \citeauthor{bassily2020stability} demonstrated that, GD can have \emph{dimension-independent} sample complexity rate. However, to achieve that, one has to use non-standard choice of hyperparameters which affects the efficiency of the algorithm. In particular, the number of rounds becomes quadratic in the size of the sample (as opposed to linear, with standard choice). On the other hand, a classical covering argument shows that linear dependence in the dimension is the worst possible, for any empirical risk minimizer, irrespective of properties such as stability. 

In terms of lower bounds, \citet*{amir2021sgd} were the first to show that GD may have a dimension dependence in the sample complexity. They showed that, with natural hyperparameters, the algorithm must observe number of samples that is at least logarithmic in the dimension. This result was recently improved by \citet*{schliserman2024dimension} that showed that at least square root of the dimension is required. Taken together, so far it was shown that either the algorithm's hyperparameters are tuned to achieve stability, at a cost in running time, or the algorithm must suffer \emph{some} dimension dependence, linear at worst square root at best.

Here, we close the gap and show that linear dependence is necessary. Informally, we provide the following generalization error bound, in terms of dimension $d$, sample size, $m$, and hyperparameters of the algorithm, $\eta$ and $T$ (the learning rate and number of iterations). We show that when $T$ is at most cubic in the dimension (see \cref{thm:main} for a formal statement):

\[ \textrm{Generalization gap of GD} = \Omega\left ( \min\left\{ \frac{d}{m},1\right\}\cdot \min \left\{\eta \sqrt{T},1\right\}\right).\]

The first factor in the RHS describes the linear dependence of the generalization error in the dimension, and corresponds to the optimal sample complexity of empirical risk minimizers, as demonstrated by \citet*{carmon2023sample}.
The second term lower bounds the stability of the algorithm \cite{bassily2020stability}, and played a similar role in previous bounds \cite{schliserman2024dimension, amir2021sgd}. Each factor is optimal for a certain choice of hyper-parameters, and cannot be improved. Most importantly, for a standard choice of $\eta=O(1/\sqrt{T})$, the first term is dominant, and the aformentioned lower bound is complemented with the upper bound of \citet{carmon2023sample}. Our result implies, then, a sample complexity of $\tilde \Theta(d/m+1/\sqrt{m})$. When $d\ge m$, the second factor is dominant. When running time is at most quadratic in number of examples, this term also governs the stability of the algorithm, hence the result of \citet{bassily2020stability} provides a complementary upper bound    (see further discussion in \cref{sec:discussion}).

\ignore{
Arguably, one of the most studied topics in learning, supervised learning, can be roughly described as the the task of fitting a model (or hypothesis) when given a finite amount of past observations, or data, coming out of some stochastic distribution to be learnt. Contemporary theories tend to stress the tension between the complexity of the model to be learnt and the scarcity of the data to be trained upon. Common wisdom suggests that the complexity of a model should be adjusted in proportion to the amount of training data, in order to prevent overfitting. This principle has been formalized and refined over recent years \cite{alon1997scale, blumer1989learnability,valiant1984theory,vapnik2015uniform,haussler2018probably}, leading to a mature understanding of what can be termed \emph{underparameterized} learning. On the other hand, recent advances in the development of deep learning algorithms, has led to novel learning algorithms, that evidently \cite{neyshabur2014search,zhang2021understanding} show capabilities of generalizing even without explicitly trying to avoid overfitting. In turn, it is, today, one of the most emerging challenges in learning theory to provide a theoretical account for learning \emph{overparametrized} setups.

Towards this goal, a very popular, and useful, theoretical model for learning has turned out to be Stochastic Convex Optimization (SCO) \cite{shalev2009stochastic}. In SCO, the examples can be thought of as stochastic loss function that are \emph{convex} and \emph{Lipschitz} when considered as mapping over the parameter space of the model to be learnt. Though this setup may seem simplistic (and in practice, only rarely a learning algorithm may be considered as solving a convex problem), it turns out to be reach enough, and even this simplistic setup challenges the common view that learning should be constrained to an underparametrized setup. More concretely, \citet{shalev2009stochastic} demonstrated how learning can occur in SCO even in the \emph{overparameterized} regime. In more detail, in SCO, one consider a loss function $f(w,z)$ over a parameter space $\W\subseteq \mathbb{R}^d$, in dimension $d$. Given a distribution $D$ over the space $Z$, a learner that gets to observe finite set of samples $\{z_1,\ldots, z_m\}$ needs to minimize the population loss:
\[ \textrm{minimize}~F(w) = \min_{z\sim D}[ f(w,z)].\]
\citet{shalev2009stochastic} showed that the capacity of the model to be learned indeed scales with the dimension, and they showed that one can overfit the data, unless $\Omega (\log d)$ examples are observed. Later, \citet{feldman2016generalization} improved the result and showed that in fact $\Omega(d)$ examples are necessary to avoid overfitting. On the other hand, it was known \cite{bousquet2002stability, zinkevich2003online} that there \emph{are} algorithms that can find an approximate minimizer to the population loss, even without observing a dimension dependent amount of data. In other words, it seems that overparameterized learning is possible, already in a simple task such as stochastic convex optimization.
}

\section{Background}\label{sec:background}
We consider the standard setup of Stochastic Convex Optimization (SCO) as in \cite{shalev2009stochastic}. Set $\W= \{w: \|w\|\le 1\}$, and let $\Z$ be an arbitrary, finite domain (our main result is a lower bound, hence finiteness of $\Z$ is without loss of generality). We assume that there exists a function $f(w,z)$ that is convex and $L$-Lipschitz in $w\in \W$ for every choice of $z\in \Z$. Recall that a function $f$ is convex and $L$-Lipschitz if for any $w_1,w_2 \in \W$ and $0\le \lambda \le 1$:
\begin{equation} f(\lambda w_1 + (1-\lambda)w_2)\le \lambda f(w_1) + (1-\lambda) f(w_2),~ \textrm{and},~ |f(w_1)-f(w_2)|\le L\|w_1-w_2\|.\end{equation}

\paragraph{First order optimization} Algorithmically we require further assumptions concerning any interaction with the function to be optimized.
 Recall \cite{rockafellar2015convex} that, for fixed $z$, the sub-gradient set of $f(w,z)$ at point $w$ is the set:
\[ \partial f(w,z)= \left\{g: f(w',z) \ge f(w,z) +g^\top (w'-w), \forall w'\in \W\right\}.\]
A first order oracle for $f$ is a mapping $\O_z(w)$ such that
$ \O_z(w)\in \partial f(w,z).$
Our underlying assumption is that a learner has a first order oracle access. In other words, given a function $f(w,z)$, we will assume that there is a procedure $\O_z$ that calculates and returns a subgradient at every $w$ for every $z$. Recall \cite{rockafellar2015convex,bubeck2015convex} that When $|\partial f(w,z)|=1$, the function is differentiable, at $w$, and in that case, the unique subdifferential is the gradient $\nabla f(w,z)$.

\subsection*{Learning}
A learning algorithm $A$, in SCO, is any algorithm that receives as input a sample $S=\{z_1,\ldots, z_m\}\in \Z^m$ of $m$ examples, and outputs a parameter $w_S$. An underlying assumption in learning is that there exists a distribution $D$, unknown to the learner $A$, and that the sample $S$ is drawn i.i.d from $D$. The goal of the learner is to minimize the population loss:
\[ F(w) = \E_{z\sim D}[f(w,z)],\]
More concretely, We will say that the learner has sample complexity $m(\epsilon)$ if, assuming $|S|\ge m(\epsilon)$, then w.p. $1/2$ (Again, because we mostly care about lower bounds, fixing the confidence will not affect the generality of our result):
\begin{equation}\label{eq:learn} F(w_S) - \min_{w\in \W} F(w) \le \epsilon.\end{equation}

\paragraph{Empirical Risk Minimization} A natural approach to perform learning is by \emph{Empirical Risk Minimization} (ERM). Given a sample $S$, the empirical risk is defined to be: \[F_S(w) = \frac{1}{|S|} \sum_{z\in S} f(w,z).\] An $\epsilon$-ERM is any algorithm that, given sample $S$, returns a solution $w_S\in W$ that minimizes the empirical risk up to additive error $\epsilon>0$.
%
Recently, \citet{carmon2023sample} showed that any $\epsilon$-ERM algorithm has a sample complexity bound of \begin{equation}\label{eq:erm} m(\epsilon)= \tilde O\left(\frac{d}{\epsilon}+\frac{1}{\epsilon^2}\right),\end{equation}
The above rate is optimal up to logarithmic factor \cite{feldman2016generalization}. Namely, there exists a construction and an ERM that will fail, w.p. $1/2$, unless $m=\Omega(d/\epsilon)$ examples are provided\footnote{the $\Omega(1/\epsilon^2)$ sample complexity bound is more straightforward and follows from standard information-theoretic arguments}. Importantly, though, there are algorithms that can learn with much smaller sample complexity. In particular SGD \cite{zinkevich2003online}, stable-GD \cite{bassily2020stability} and regularized ERMs \cite{bousquet2002stability}.

\subsection*{Gradient Descent}
We next depict Gradient Descent whose sample complexity is the focus of this work. GD depends on hyperparameters $T\in \mathbb{N}$ and $\eta\ge 0$ and operates as follows on the empirical risk. The algorithm receives as input a sample $S=\{z_1,\ldots, z_m\}$, defines $w_0=0$, operates for $T$ iterations according to the following recursion:
 \begin{equation}\label{eq:gd} w_{t} = \Pi\left[w_{t-1} - \frac{\eta}{|S|}\sum_{z\in S} \O_z(w_t)\right] \Rightarrow~  w_S^{GD} : = \frac{1}{T} \sum_{t=1}^T w_t,
 \end{equation}
 where $\Pi$ is the projection onto the unit ball, and $\O_z(w_t)$ is a subgradient of the loss function $f(w,z)$. The final output, $w_S^{GD}$, of the algorithm is the averaged iterate (our result, though, can be generalized to other possible suffix-averages such as, say, outputting the last iterate, see \cref{thm:main_suffix}).
GD constitutes an $\epsilon$-ERM. Concretely, it is known \cite{bubeck2015convex, nesterov2013introductory} that GD minimizes the empirical risk and its optimization error is given by:
\begin{equation}\label{eq:gd_empirical} F_S(w_S^{GD}) - \min_{w\in W}F_S(w) = \Theta\left(\min\left\{\eta + \frac{1}{\eta T},1\right\}\right).\end{equation} 
The above bound is tight irrespective of the dimension\footnote{For completeness, we demonstrate the lower bound for $d=1$ at \cref{apx:oplow}}. 
The population loss have also been studied, and \citet{bassily2020stability} demonstrated the following learning guarantee:

\begin{equation}\label{eq:gd_generalization} \E_{S\sim D^m}\left[ F(w_S^{GD}) - \min_{w\in W}F(w)\right] = O\left(\frac{1}{\eta T} + \frac{\eta T}{m} + \eta\sqrt{T}\right).\end{equation} 
The last two terms in the RHS follow from a stability argument, provided in \cite{bassily2020stability}, and the first term follows from the optimization error of GD as depicted in \cref{eq:gd_empirical}. 
Notice that there is always an $O(\eta \sqrt{T})$ gap between the generalization error and empirical error of gradient descent. 

\ignore{
\begin{theorem*}[\cite{schliserman2024dimension}] For sufficiently large $T$ and $\eta =O(1/\sqrt{T})$. For $d=\Omega(mT+m^2)$, there exists a distribution $D$ over instances $\Z$ and a convex, differentiable, and $1$-Lipschitz function $f$ such that for GD, w.p. $1/6$:
\begin{equation}\label{gd:previous} F(w_S^{GD})- \min_{w\in \W} F(w) = \Omega\left(\min\left\{\eta \sqrt{T}+\frac{1}{\eta T},1\right\}\right).\end{equation}
    
\end{theorem*}
One consequence of this gap is that if we run $GD$ with step size $\eta= O(1/\sqrt{T})$ and $T= O(1/\epsilon^2)$, the algorithm will achieve empirical error of $O(\epsilon)$ but generalization error of $\Omega(1)$, meaning it will overfit unless we observe at least $\Omega(\sqrt{d})$ examples. Our main result improves over this and shows that GD may overfit unless it osbserves $\Omega(d)$ examples.}
\ignore{
\paragraph{Stochastic Gradient Descent}
Our next set of result deals with yet another algorithm known as Stochastic Gradient Descent (SGD):

SGD is yet another first order method that depends on a hyper parameter $\eta\ge 0$ that may depend on the sample size. The algorithm, receives as input a sample $S=\{z_1,\ldots, z_m\}$ and access to a first order oracle $\O_z$ and operates for $T=m$ iterations as follows: we define $w_0 =0 $. Then at iteration $t$ we define:
 \[ w_{t} = \Pi\left[w_{t-1} - \O_{z_t}(w_t)\right].\]
The algorithm then outputs
\begin{equation}\label{eq:gd} w_S^{SGD} = \frac{1}{T} \sum_{t=1}^T w_t,\end{equation}
Again, the averaged iterate.
SGD has the following learning guarantee, which improves over \cref{eq:gd_generalization}:
\begin{equation}\label{eq:sgd_generalization} \E_{S\sim D^m}[ F(w_S^{SGD}) - \min_{w\in W}F(w)] = O\left(\frac{1}{\eta T} + \eta\right),\end{equation} 
which by choice of $\eta = O(1/\sqrt{m})$ yields the optimal sample complexity rate for a learning algorithm of $m(\epsilon)= O(1/\epsilon^2)$. Surprisingly, though, SGD does not fall into the category of empirical risk minimizers. \citet{koren2022benign} showed that, in high dimension, with the above choice of parameters, SGD may have generalization gap and 
\[ \E_{S\sim D}[F_S(w_S^{SGD})- \min_{w\in \W} F_S(w_S^{SGD})] = \Omega(1).\]
Again, this result was improved by \cite{schliserman2024dimension} that provided improved dependence on the dimension:

\begin{theorem*}[\cite{schliserman2024dimension}]\label{thm:schlisserman} For sufficiently large $T$ and $\eta =O(1/\sqrt{T})$. For $d=\Omega( m^2)$, there exists a distribution $D$ over instances $\Z$ and a convex, differentiable, and $1$-Lipschitz function $f$ such that for SGD, w.p. $1/6$:
\begin{equation}\label{sgd:previous} F_S(w_S^{SGD})- \min_{w\in \W} F_S(w) = \Omega\left(\min\left\{\eta \sqrt{T}+\frac{1}{\eta T},1\right\}\right).\end{equation}
    
\end{theorem*}}
\section{Main Result}
\begin{theorem}\label{thm:main}
For every $d\ge 4096, T\ge 10, m\ge 1$ and $\eta>0$, there exists a distribution $D$, and a $4$-Lipschitz convex function $f(w,z)$ in $\mathbb{R}^{d+1}$, such that for \emph{any} first order oracle of $f(w,z)$, with probability $1/2$, if we run GD with $\eta$ as a learning rate then:
 \[ F(\wgd) - F(0) \ge \thebound.\]

\end{theorem}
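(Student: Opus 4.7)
The plan is to construct a single convex $4$-Lipschitz function that simultaneously fuses the two known obstructions to GD generalization: the \emph{dimension-dependent ERM lower bound} of Feldman and \citet{carmon2023sample} responsible for the $d/m$ factor, and the \emph{drift/stability lower bound} of \citet{bassily2020stability,amir2021sgd,schliserman2024dimension} responsible for the $\eta\sqrt{T}$ factor. The product form of \thebound\ strongly suggests that the two phenomena should be decoupled into orthogonal blocks of $\mathbb{R}^{d+1}$, after which each factor can be analyzed in near-isolation and the results multiplied.

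First I would partition the coordinates into a ``drift block'' and an ``ERM block'' and write $f(w,z) = f_{\mathrm{drift}}(w) + h(w,z)$, with $f_{\mathrm{drift}}$ sample-independent. The drift component is a piecewise-linear convex function, together with a first-order oracle, designed so that GD iterates $w_t$ progressively populate a deterministic sequence of coordinates $e_{i_1}, e_{i_2}, \ldots$ inside the drift block with accumulated magnitude $\Omega(\min\{\eta\sqrt{t},1\})$. This is the gadget behind the $\eta\sqrt{T}$ instability of GD, and the saturation $T \le d^3/136$ simply bounds the length of the deterministic coordinate schedule such a block can support while remaining convex and $O(1)$-Lipschitz in ambient dimension $d$. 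The ERM component $h(w,z)$ is a Feldman/Carmon-et-al.\ style hard instance on the ERM block, tied to the drift coordinates so that each $e_{i_j}$ is ``neutralized'' only if a specific $z \in \Z$ appears in the sample $S$; a coupon-collector argument then gives that with probability at least $1/2$ over $S \sim D^m$, at least $\Omega(\min\{d,m\})$ of the populated drift coordinates remain unresolved.

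Next I would execute the two analyses. For the drift analysis, I would place $h$'s action on coordinates disjoint from the drift block so that $\nabla h(\cdot,z)$ contributes zero in the drift coordinates; the trajectory along these coordinates is then identical to running GD on $f_{\mathrm{drift}}$ alone, and the standard analysis from prior work yields $\|(\wgd)_{\mathrm{drift}}\| = \Omega(\min\{\eta\sqrt{T},1\})$. For the ERM analysis, each ``unresolved'' drift coordinate $e_{i_j}$ populated to magnitude $\Omega(\eta\sqrt{T})$ contributes $\Omega(\eta\sqrt{T}/m)$ to the expected loss of $\wgd$ under a fresh $z\sim D$ while contributing $0$ to $F(0)$; summing over the $\Omega(\min\{d,m\})$ unresolved coordinates yields $F(\wgd)-F(0) = \Omega((d/m)\cdot \eta\sqrt{T})$, truncated at $1$ by the Lipschitz constant. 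The $\min\{\cdot,1\}$ operations in the theorem statement precisely encode these saturation regimes.

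The hardest step is the compatibility of the two gadgets: one must produce a single convex, $4$-Lipschitz function whose drift analysis survives the addition of $h$ and whose ERM hardness survives the addition of $f_{\mathrm{drift}}$, while the lower bound holds for \emph{any} first-order oracle (not a specifically adversarial one). This is delicate because previous drift lower bounds routinely exploited a tailored subgradient selection at non-differentiable points, whereas the ERM construction must remain hard independent of the oracle; reconciling them forces the construction to be differentiable wherever the drift analysis needs a particular subgradient, and to make all subgradients equally damaging at the remaining non-differentiable points. A secondary, bookkeeping-level obstacle is propagating the constants ($272\cdot 16^2$, $1032$, $136$) through the composed analyses and verifying that the statement is non-vacuous for the stated regime $d \ge 4096,\ T \ge 10$.
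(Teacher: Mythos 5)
Your proposal has a genuine internal contradiction that blocks it from working, and it misses the two key technical ideas in the paper's proof.

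The contradiction: you propose to place the drift gadget $f_{\mathrm{drift}}$ and the Feldman-style gadget $h(w,z)$ on \emph{disjoint} coordinate blocks, ``so that $\nabla h(\cdot,z)$ contributes zero in the drift coordinates.'' But then you assert that each unresolved drift coordinate ``populated to magnitude $\Omega(\eta\sqrt{T})$ contributes $\Omega(\eta\sqrt{T}/m)$ to the expected loss.'' If $h$ acts only on coordinates disjoint from the drift block, $h$ cannot depend on the drift coordinates at all, so their magnitude cannot affect $F(\wgd)-F(0)$; the generalization gap would reduce to that of $h$ alone, which is at best $\Omega(d/m)$ with no $\eta\sqrt{T}$ factor. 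The product structure $\min\{d/m,1\}\cdot\min\{\eta\sqrt{T},1\}$ demands that the drift happen \emph{inside} the coordinates the ERM gadget penalizes. This is exactly what the paper does: $f(w,V)=g(w,V)+\alpha N(w)$, where the Nemirovski-type function $N$ and the Feldman function $g$ live on the \emph{same} $d$ coordinates. The subgradient of $\alpha N$ steers the trajectory into the support of a bad vector $v^\star$ not covered by the sample, while the construction ensures $0\in\partial g(w_t^\SS,V_i)$ for every $V_i\in S$ along the entire trajectory, so $g$ contributes nothing to the GD update but still charges $\epsilon\cdot(w^{GD}\cdot v^\star)/\sqrt{d}$ on a fresh example. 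Also, the drift schedule is not deterministic: the indices being populated are precisely those where $v^\star$ is nonzero, hence sample-dependent.

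The second gap is the one you gesture at but do not solve: the drift oracle must know which block of coordinates to push next, and this depends on $v^\star$, hence on the sample $S$. A bona fide first-order oracle for $f(\cdot,z)$ may only see $w$ and $z$, not $S$. The paper resolves this with a separate reduction (\cref{lem:reduce}): it adjoins one extra coordinate $w(d+1)$ and adds $h(w,z)=\tfrac12(w(d+1))^2+\alpha(z)w(d+1)$, where $\{\alpha(z)\}$ is a rationally-independent ``hash.'' After $t$ steps, $w_t(d+1)$ uniquely encodes the multiset of examples seen, so a legal per-$z$ oracle can decode $S$ from $w_t$ and mimic the sample-dependent oracle. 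The small strong convexity in coordinate $d+1$ forces strict inequality in the interpolation constraints, so the convex interpolation (\cref{lem:interpolation}) through the $(f_j,g_j,w_j)$ triples is \emph{differentiable} at every visited point; this is what upgrades the result from ``there exists a bad oracle'' to ``for any first-order oracle,'' not the heuristic of making ``all subgradients equally damaging.'' Without this encoding-plus-interpolation step, the statement in \cref{thm:main} (quantified over all oracles) does not follow.
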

We remark, that the above theorem is true for \emph{any} suffix averaging (e.g. last iterate), and not restricted to the averaged iterate (see \cref{thm:main_suffix}).
We now specialize our bound for two interesting regimes. First, we improve previous dependence in the dimension in \cite{amir2021sgd, schliserman2024dimension} and obtain a generalization error bound for $d=\Omega(m+T^{1/3})$:
\begin{corollary}\label{cor:main1}
    Fix $\eta$, and suppose $d =\Omega\left( m+ T^{1/3}\right)$. There exists a distribution $D$, and an $O(1)$-Lipschitz convex function $f(w,z)$ in $\mathbb{R}^{d}$, such that for \emph{any} first order oracle of $f(w,z)$, with probability $1/2$, if we run GD for $T$ iterations, then:
    \begin{equation}\label{eq:main1} F(\wgd) - F(0) \ge \Omega\left( \min\left\{\eta \sqrt{T}+ \frac{1}{\eta T},1\right\}\right).\end{equation}
\end{corollary}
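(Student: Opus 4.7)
The plan is to derive \cref{cor:main1} by combining \cref{thm:main} with the dimension-independent optimization lower bound recorded in \cref{eq:gd_empirical}. The first input supplies the $\Omega(\eta\sqrt{T})$ term and the second supplies the $\Omega(1/(\eta T))$ term; the two are glued by a direct-sum construction.

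First, I would unpack \thebound\ under the hypothesis $d=\Omega(m+T^{1/3})$. Choosing the hidden constant large enough that both $d\ge 1032\,m$ and $d^{3}\ge 136\,T$ hold, the two nested minima collapse: $\min\{d/(1032m),1\}=1$ and $\min\{\lfloor d^{3}/136\rfloor,T\}=T$. Substituting into \cref{thm:main} (applied in ambient dimension $d-1$, so that the theorem's $\mathbb{R}^{(d-1)+1}$ sits inside our $\mathbb{R}^{d}$) then yields a hard instance $f_{1}$ satisfying
\[
    F_{1}(\wgd)-F_{1}(0)\;\ge\;\Omega(\min\{\eta\sqrt{T},1\}).
\]

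Second, I would invoke the $d=1$ optimization lower bound behind \cref{eq:gd_empirical}, detailed in \cref{apx:oplow}. That furnishes a scalar convex $O(1)$-Lipschitz loss $f_{2}$ whose population risk $F_{2}$ satisfies $F_{2}(\wgd)-F_{2}(0)\ge\Omega(\min\{1/(\eta T),1\})$. To combine, I would define the direct sum $f((w,u),z)=\tfrac{1}{2}f_{1}(w,z)+\tfrac{1}{2}f_{2}(u)$ on $\mathbb{R}^{d-1}\oplus\mathbb{R}=\mathbb{R}^{d}$. Because the loss is separable in the two blocks, GD on $f$ decouples into two independent runs (with the same $\eta$ and $T$), so the two lower bounds add. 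The elementary inequality $\min\{a,1\}+\min\{b,1\}\ge\tfrac{1}{2}\min\{a+b,1\}$ then delivers the claimed bound.

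The main obstacle is making the second step deliver its $\Omega(1/(\eta T))$ bound as a gap from the starting point $F_{2}(0)$ rather than as the excess-loss quantity $F_{2}(\wgd)-\min F_{2}$ that usually appears in the literature. I plan to handle this by choosing $f_{2}$ so that $0$ is both the initial point and the minimizer and by tuning the worst-case first-order oracle at $0$ so that the averaged iterate $\wgd$ is forced to a point of suboptimality $\Omega(1/(\eta T))$ relative to $F_{2}(0)$, reusing the construction in \cref{apx:oplow}. Once this is in hand, the remainder of the argument is bookkeeping of Lipschitz constants and the routine decomposition of GD across orthogonal blocks.
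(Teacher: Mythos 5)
Your high-level plan (get the $\eta\sqrt{T}$ term from \cref{thm:main} after observing that $d=\Omega(m+T^{1/3})$ collapses both nested minima, and get the $1/(\eta T)$ term from the one-dimensional optimization lower bound) is exactly the argument the paper gestures at when it writes that ``the first term follows from \cref{thm:main}, the second term follows from the optimization error in \cref{eq:gd_empirical}.'' The paper, however, does not combine them into a single direct-sum instance; since $\min\{a+b,1\}\le 2\max\{\min\{a,1\},\min\{b,1\}\}$, it suffices to hand back \emph{either} the \cref{thm:main} construction \emph{or} the scalar one, depending on which of $\eta\sqrt T$, $1/(\eta T)$ is larger, so a case split is enough and the bookkeeping about decoupled GD runs, rescaled step size $\eta/2$, and the extra coordinate can be avoided.

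The genuine gap is in the fix you propose for the baseline issue. You want to redesign $f_2$ so that $0$ is simultaneously the initializer and the minimizer, and then force the averaged iterate to be $\Omega(1/(\eta T))$-suboptimal relative to $F_2(0)$. But this is impossible whenever $\eta^2 T<1$: if $w_0=w^*=0$, the standard subgradient-method bound gives
\[
F_2(\wgd)-F_2(0)\;\le\;\frac{\|w_0-w^*\|^2}{2\eta T}+\frac{\eta L^2}{2}\;=\;\frac{\eta L^2}{2},
\]
independently of $T$. So with this choice of $f_2$ the best lower bound you can hope for is $\Omega(\eta)$, which is strictly weaker than $\Omega(1/(\eta T))$ precisely in the regime $\eta<1/(\eta T)$ where the $1/(\eta T)$ term is the one you need. (Worse, when $\eta T<1$ the Lipschitz bound $F(\wgd)-F(0)\le L\|\wgd\|\le L^2\eta T$ makes $F(\wgd)-F(0)=\Omega(1)$ unattainable for \emph{any} construction, so the inequality as literally stated with baseline $F(0)$ cannot hold there.) This is not a defect you introduced: \cref{cl:oplow}'s ``case $2$'' takes $f(x)=\alpha x$ with $\alpha=1/((T+1)\eta)$, whose minimizer over $[-1,1]$ is $x=-1$, and the displayed inequality in that case is $f(x^{GD})-f(-1)\ge\cdots$, not $f(x^{GD})-f(0)\ge\cdots$; the claim's statement is loose. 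The right resolution is to measure the gap against $\min_{\W}F$ rather than $F(0)$ (these coincide for the \cref{thm:main} instance, which is why the theorem can phrase it as $F(0)$, but they do not coincide for the scalar instance). With that reading, case $2$ of \cref{cl:oplow} already delivers the term you need and the rest of your argument goes through.
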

The first term follows from \cref{thm:main}, the second term follows from the optimization error in \cref{eq:gd_empirical}. \Cref{eq:main1} does not hold for $d< m$, and the linear improvement over \cite{schliserman2024dimension, amir2021never} is tight. This can be seen from \cref{eq:gd_empirical} that shows that GD achieves $\epsilon$ empirical excess error when $\eta= O(1/\sqrt{T})$ and $T=O(1/\epsilon^2)$. \Cref{eq:main1} becomes vacuous for such choice of parameters, but \citet{carmon2023sample} showed that the sample complexity of \emph{any ERM} is bounded by $\tilde O((d+\sqrt{m})/m)$. However, as depicted next, this upper bound becomes tight and GD does not improve over a worst-case ERM:

\begin{corollary}\label{cor:main2}
    Suppose $T= O(m^{1.5})$, and $\eta = \Theta(1/\sqrt{T})$. There exists a distribution $D$, and an $O(1)$-Lipschitz convex function $f(w,z)$ in $\mathbb{R}^{d}$, such that for any first order oracle of $f$, with probability $1/2$, if we run GD with $\eta$, for $T$ iterations:
    \[ F(\wgd) - F(0) \ge  \lbound{\frac{d}{m}+\frac{1}{\sqrt{m}}}.\]
\end{corollary}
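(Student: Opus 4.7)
The plan is to derive the corollary by combining \cref{thm:main} with the classical $\Omega(1/\sqrt{m})$ information-theoretic lower bound for SCO, splitting on whether $d\ge \sqrt{m}$. Since $\Omega(a+b)=\Omega(\max\{a,b\})$ for nonnegative $a,b$, and since the corollary only requires the existence of a single distribution per parameter setting, it suffices to produce a construction in each regime that attains whichever of the two terms $d/m$ and $1/\sqrt{m}$ dominates.

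First I would address the regime $d\ge \sqrt{m}$ by applying \cref{thm:main} directly. Under the hypotheses $\eta=\Theta(1/\sqrt{T})$ and $T=O(m^{3/2})$, the second factor $\min\{\eta\sqrt{\min\{\lfloor d^3/136\rfloor,T\}},1\}$ is $\Omega(1)$: indeed $d^3\ge m^{3/2}=\Omega(T)$, so $\min\{\lfloor d^3/136\rfloor,T\}=\Omega(T)$ and consequently $\eta\sqrt{\min\{\lfloor d^3/136\rfloor,T\}}=\Omega(\eta\sqrt{T})=\Omega(1)$. \Cref{thm:main} therefore delivers $F(\wgd)-F(0)\ge \Omega(\min\{d/m,1\})$, which matches the claimed lower bound up to constants, because in this regime $d/m\ge 1/\sqrt{m}$ forces $\min\{d/m+1/\sqrt{m},1\}=\Theta(\min\{d/m,1\})$.

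For the complementary regime $d<\sqrt{m}$, the target reduces to $\Omega(1/\sqrt{m})$, since $d/m<1/\sqrt{m}$ makes $\min\{d/m+1/\sqrt{m},1\}=\Theta(1/\sqrt{m})$. Here I would invoke the classical one-dimensional SCO lower bound --- e.g., a biased-coin distribution paired with absolute-value loss, as alluded to in the footnote following \cref{eq:erm} --- which shows that \emph{any} algorithm observing only $m$ samples, and in particular GD with any $(\eta,T)$, incurs excess risk $\Omega(1/\sqrt{m})$ with constant probability. This one-dimensional construction embeds trivially into $\mathbb{R}^{d+1}$ while remaining convex and $O(1)$-Lipschitz.

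The main conceptual content --- the linear-in-$d$ generalization lower bound at $\eta\sqrt{T}=\Theta(1)$ --- is already absorbed into \cref{thm:main}, so the corollary amounts to a parameter specialization plus case analysis. I do not anticipate any real obstacle beyond reconciling the multiplicative constants across the two sub-regimes, which is routine bookkeeping.
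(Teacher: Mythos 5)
Your proposal is correct and matches the paper's own proof: both split on whether $d\ge\sqrt{m}$, apply \cref{thm:main} in the large-$d$ regime (using $T=O(m^{1.5})\le d^3$ and $\eta=\Theta(1/\sqrt{T})$ to make the second factor $\Omega(1)$), and invoke the classical one-dimensional $\Omega(1/\sqrt{m})$ information-theoretic lower bound in the small-$d$ regime. The paper is equally terse about the $d<\sqrt{m}$ case, so you are not skipping anything the paper spells out.
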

\Cref{cor:main2} complements \citet{carmon2023sample} upper bound, and improves over \citet{feldman2016generalization} lower bound that only showed existence of \emph{some} ERM with the aforementioned sample complexity. To see that \cref{cor:main2} follows from \cref{thm:main}, notice that when $d\le \sqrt{m}$, then $d/m < 1/\sqrt{m}$ and the bound is dominated by the second term, which is a well known-information theoretic lower bound for learning. When $d> \sqrt{m}$, and $T<m^{1.5}$ we have that $T\le d^3$, plugging $\eta=O(1/\sqrt{T})$ yields the bound.

\subsection{Discussion}\label{sec:discussion}

\Cref{thm:main} provides a new generalization error bound for GD. It shows that the worst case sample complexity for ERMs, derived by \citet{feldman2016generalization}, is in fact applicable also to a very natural first order algorithm, and not, just, to abstract ERMs. This Highlights the importance of choosing the right algorithm for learning in SCO. As discussed, the bound is tight in several regimes, nevertheless still there are unresolved open problems.

\paragraph{Stability in low dimension}
When GD is treated as a naive empirical risk minimizer, and $\eta = O(1/\sqrt{T})$, $T=O(m)$, there is no improvement, using GD, over a worst-case ERM. In the other direction, for dimension that is linear in $m$, one cannot improve over the $\Omega (\eta \sqrt{T})$ term that governs stability. Our bound, though, provide a hope that stability in low dimension can yield an improved bound. In particular, consider the case where $\eta =1/T^{1/4}$ and $d< m$. This is a case where we apply a stable algorithm in small dimensions. Our bound does not negate the possibility of an improved generalization bound. That would mean that, at least at some regime, GD can improve over the worst-case ERM behaviour. We leave it as an open problem for future study
\begin{open}
Is there a generalization bound for GD such that
\[ \E_{S\sim D^m} \left[F(w_S^{GD})- \min_{w\in \W} F(w)\right] = O\left(\frac{d\eta \sqrt{T}}{m}+\frac{1}{\sqrt{m}} \right).\]
Alternatively, can we prove an improved generalization error bound such that:
\[ \E_{S\sim D^m} \left[F(w_S^{GD})- \min_{w\in \W} F(w)\right] = \Omega\left(\min\left\{\frac{d}{m}, \eta\sqrt{T},1\right\}\right).\]
\end{open}
\paragraph{Late stopping}
Another regime where there is a gap between known upper bound and lower bound appears when $T=\Omega(m^2)$. Specifically, the stability upper bound for GD by \citet{bassily2020stability} gives
\[ \E_{S\sim D^m} \left[F(w_S^{GD}) -\min_{w\in \W} F(w)\right] =\ubound{\eta\sqrt{T}+ \frac{\eta T}{m} + \frac{1}{\eta T}}.\]
By \cref{cor:main1}, for large enough dimension:
\[ \E_{S\sim D^m} \left[F(w_S^{GD})- \min_{w\in \W} F(w)\right] = \lbound{\eta\sqrt{T}+  \frac{1}{\eta T}}.\]
When $T=O(m^2)$, the two bounds coincide. Indeed, for the $\eta T/m$ term to dominate the $\eta\sqrt{T}$ term, we must have $T=\Omega(m^2)$. One has to take at least $T=O(m^2)$ iterations in order to generalize with GD (in fact, any full batch method \cite{amir2021never}), however $T=O(m^2)$ iterations are sufficient. Nevertheless, the above gap does yield the possibility of an \emph{unstable} GD method that does generalize. Particularly, if we just regulate the term $\eta \sqrt{T}$, but allow $\eta T/m = \Omega(1)$, then this may yield a regime where GD is unstable (and ERM bounds do not apply) and yet generalize.

\begin{open}
    Are there choices of $\eta$ and $T$ (that depend on $m$) such that $\eta T/m \in \Omega(1)$, but GD has dimension indpendent sample complexity? 
\end{open}
Notice that the $\eta T/m$ term also governs stability in the \emph{smooth} convex optimization setup \cite{hardt2016train}. Recall that a function $f(w,z)$ is said to be $\beta$-smooth if for all $z$, $f(w,z)$ is differentiable, and the gradient is an $\beta$-Lipschitz mapping \cite{hazan2016introduction, bubeck2015convex}. For smooth optimization, even if $\eta \sqrt{T}=\Omega(1)$, GD is still stable. \citet*{hardt2016train} showed that the stability of GD in the smooth case is governed by $O\left(\frac{\eta T}{m}\right)$ for $\eta < 1/\beta$. Therefore, the question of generalization when $\eta T/m \in \Omega(1)$ remains open, even under smoothness assumptions:

\begin{open}
    Assume that $f(w,z)$ is $\Theta(1)$-smooth. What is the sample complexity of GD, when $\eta$ and $T$ are chosen so that 
    $\eta +\frac{1}{\eta T} = o(1),$    
    but $\frac{\eta T}{m}=\Omega(1)$.
\end{open}

\section{Technical Overview}\label{sec:overview}
We next provide a high level overview of our proof technique. For simplicity of exposition we begin with the case $T=m=d$.
We begin by a brief overview of previous construction by \citet{amir2021sgd} that demonstrated \cref{cor:main1} when $m=\Omega(\log d)$. The construction in \cite{amir2021sgd} can be decomposed into three terms:
\[ f(w,z) = g(w,z)+ N_0(w) + h(w,z).\]
The function $g$ has the property that an ERM may fail to learn, unless dimension dependent sample size is considered. \citet{amir2021sgd} incorporated \citet{shalev2009stochastic} construction. Later, \cite{schliserman2024dimension} used Feldman's function \cite{feldman2016generalization} to construct $g$. The shift from the construction depicted in \citet{shalev2009stochastic} to Feldman's function is the first step that allows to move from logarithmic to polynomial dependence in the dimension.
In both constructions an underlying property of $g$ is that there exists a distribution $D$ such that, for small samples, there are overfitting minima. Concretely, there exists a $w_S \in \{0,1/\sqrt{d}\}^d$  such that
\begin{equation}\label{eq:berm} \frac{1}{|S|}\sum_{z\in S} g(w_S,z) - \E_{z\sim D} [g(w_S,z)] = \Omega(1).\end{equation}

The challenge is then, to make gradient descent's trajectory move towards the point $w_S$. The idea can be decomposed into two parts:

\subsection*{Simplifying with an adversarial subgradient:}
To simplify the problem, let us first ease the challenge and suppose we can choose our subgradient oracle in a way that depends on the observed sample. 
Let $N_0$ be the Nemirovski function \cite{bubeck2019complexity}:
\[ N_0(w) = \max\{- w(i),0)\}.\]

Notice that $N_0$ is not differentiable and the choice of subgradient at certain points is not apriori determined. For example, notice that every standard basis vector $-e_i \in \partial N(0)$. More generally, given a sample $S$, let $I=i_1,\ldots, i_{d'}$ be exactly the set of indices such that $w_S$, from \cref{eq:berm}, $w_{S}(i)\ne 0$. Now assume by induction that $w_t(i)> 0$ exactly for $i=i_1,\ldots i_t$, then one can show that we can define the subgradient oracle of $N_0$:
\[ \O(w_t) = -e_{i_{t+1}} \in \partial N_0(w_t).\] 
In that case $w_{t+1}$ will satisfy our assumption for $i_{t+1}$ and we can continue to follow this dynamic for $T$ steps.

Notice that, in this case, GD will converge to $w_S$ (if $\eta=1/\sqrt{d}$ which we assume now for concreteness). One can also show that the output of GD (the averaged iterate) will overfit.
The caveat is that our subgradient oracle depends on the sample $S$. In reality, the sample is drawn independent of the subgradient oracle. and all previous constructions, as well as ours need to handle this. This is discussed in the next section. But before that, let us review another challenge which is when $T\ne d$:

\paragraph{When $d\ll T$}
Another challenge we face with the construction above is that it works when we assume that $T\approx d$. That is because, in Nemirovski's function, the number of iterates we can perform is bounded by the dimension. After $d$ iterations we will end up with the vector $v= \sum_{t=1}^d \eta e_{i_t}$. If $T= \omega(d)$ then $\eta = o(1/\sqrt{d})$,
and the dynamic will end up with a too small norm vector to induce a sizeable population loss. This strategy will provide, at best, with a factor of the form $\Omega\left(\eta\sqrt{\min\{d,T\}}\right)$. Such a factor may be unsatisfactory in a very natural setting where, say, $T=O(m)$, $\eta = O(1/\sqrt{m})$, and $d= \Omega(\sqrt{m})$. To obtain the $d^3$ dependence, we perform the following alternation over the Nemirovski function. Consider the function:

\begin{equation}\label{eq:nemdef1} N(w) = \max\{0,\max_{i\le d}\{-w(i)\},\max_{i\le j\le d}\{w(j)-w(i)\}\}.\end{equation}
And suppose that at each iteration we return a subgradient as follows:

\begin{itemize}
    \item If there is $i\le d$, such that $w(i)=w(i+1)> \eta$, return subgradient $e_{i+1} -e_{i}$ and $w$ is updated by $w_{t+1}= w_t-\eta e_{i+1} + \eta e_{i}$.
    \item If there is no such $i$, then take the minimal $i$ (if exists) such that $w(i)=0$, and return subgradient $-e_i$ and update $w_{t+1}=w_t + \eta e_i$.
    \item When non of the above is met, return subgradient $0$.
\end{itemize}

The dynamic of the above scheme is depicted in \cref{fig:nemonster}, and solves the problem when $T\approx d^3$. One can show that GD will run for at least $d^3\approx T$ iterations, and will increase $O(d)$ coordinates, each, on average, by an order of $O(\eta d )$. This is better than the increase of $\eta$, in each coordinate that we get from Nemirovski's function. In this way we obtain the improved result of $\eta\sqrt{T}$, even when $T \approx d^3$. 

\paragraph{When $T \ll d$,} when the number of iterations is smaller than $d$ we face a different challenge. The immediate solution is to embed in $\mathbb{R}^d$ a construction from $\mathbb{R}^T$, this will provide us with the $\Omega(\eta \sqrt{T})$ term but, on the other hand, such a construction will not yield a $\Omega(d/m)$ term. A different approach, that exploits the dimension to its fullest, is to consider  blocks of coordinates and operate on those instead of single coordinates.

The conclusive outcome incorporates both ideas together, and we replace the Nemirovski function with a version of \cref{eq:nemdef1} that operates on $O(T^{1/3})$ blocks of coordinates. And this concludes our construction. We next move on to the challenge of replacing the data dependent oracle with a standard first order oracle.

\begin{figure}
\centering
\vstretch{1}{\includegraphics[scale=.15]{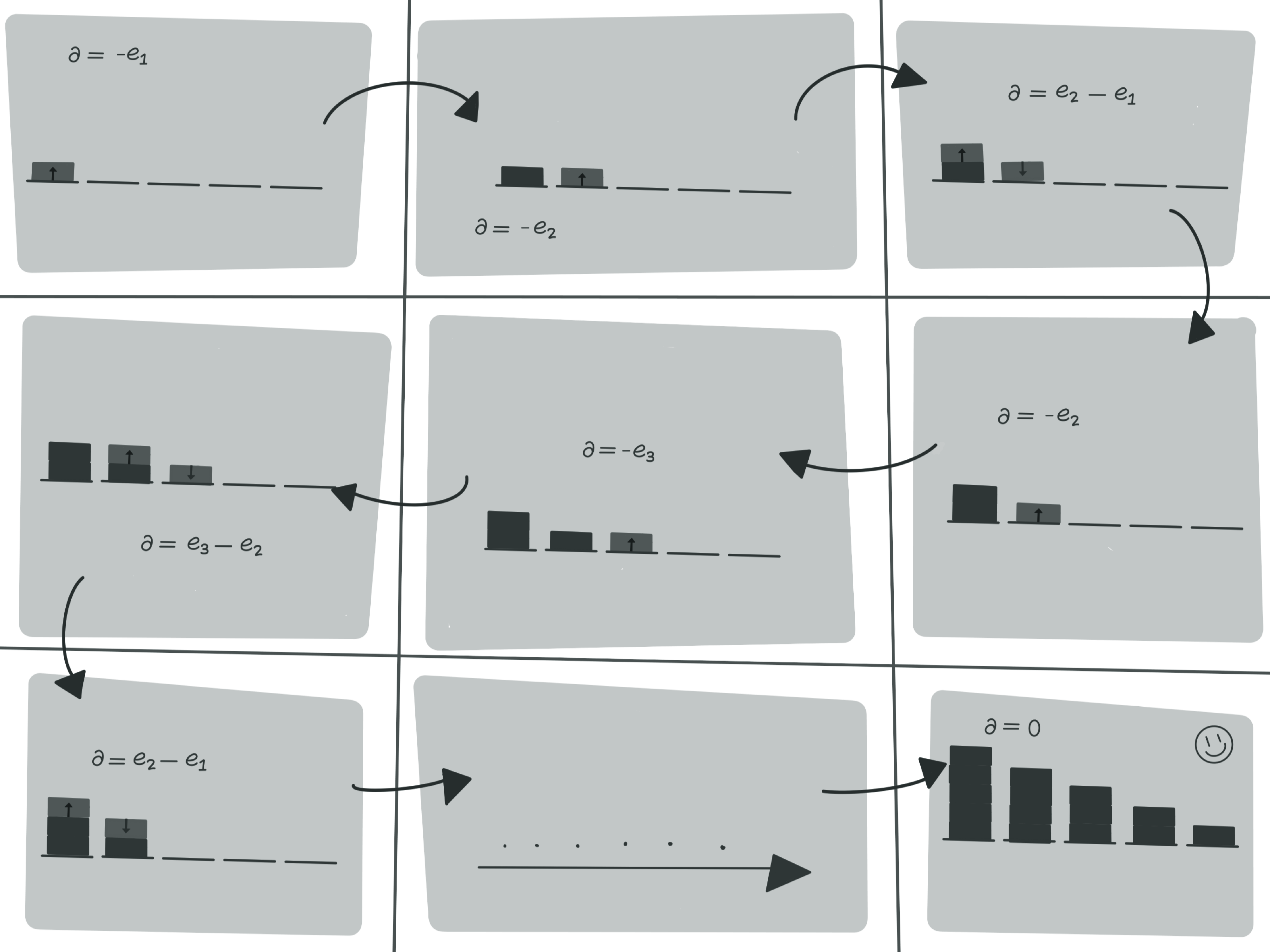}}
    \caption{Depiction of the dynamics induced by \cref{eq:nemonster} and our choice of sub-differentials}
    \label{fig:nemonster}
\end{figure}
\subsection*{Reduction to sample dependent oracle:}
As discussed, the construction above does not yield a lower bound as it relies on a subgradient oracle that is dependent on the whole sample. To avoid such dependence of the oracle on the sample, we observe that if we can infer the sample $S$ from the trajectory, i.e. if the state $w_t$ ``encodes" the sample, then formally the subgradient is allowed to ``decipher" the sample from the point $w_t$. In that way we achieve this behaviour of sample dependent subgradient oracle. This part becomes challenging and may depend on the way we choose $g$, and $N$. The simplest case, studied by \citet{amir2021sgd}, introduced the third function, $h$, which was a small perturbation function that elevates coordinates in $I$ and inhibits coordinates not in $I$. The function $h$ depends on $z$ and not on $S$, hence it cannot know apriori $I$. But, an important observation is that, in \citet{shalev2009stochastic} construction, if $i\notin I$, there exists $z\in S$ that ``certifies" that. In fact, each $z$ can be thought of as a subset of indices, and if an index appears in $z$, then it cannot be in $I$. So we can build the perturbation in a way that every coordinate is elevated, unless $z$ certifies $i\not\in I$: In that case we define $h(w,z)$ so that its gradient will radically inhibit $i$.

The last observation is what becomes challenging in our case. As discussed, to achieve improved rate, we need to use Feldman's function. When using Feldman's function the coordinates cannot be ruled out, or identified, by a single $z$ but one has to look at the whole sample to identify $I$. While \citet{schliserman2024dimension} tackle a similar problem, 
we take a slightly different approach described next: For each $z$ assign a random, positive, number $\alpha(z)$. We can think of this $\alpha$ as a hash function. Let us add another coordinate to the vector $w$, $w(d+1)$. Consider the function
\[ h(w,z)= \gamma \alpha(z)\cdot w(d+1).\]
Then $\partial h(w,z) = \gamma \alpha(z)e_{d+1}$. Write $\alpha(S) = \frac{1}{|S|}\sum_{z\in S} \alpha(z)$ then in turn: \[
w_t(d+1)= w_{t-1}(d+1) -\partial \frac{\gamma}{|S|}\sum_{z\in S} h(w_{t-1},z) = - t\cdot\gamma\alpha(S)e_{d+1}.\]
If $\gamma,\alpha(z)$ are chosen correctly, $\alpha(S)$ is a one to one mapping from samples to real numbers, and small $\gamma$ ensures that the overall addition of $h$ has negligible affect on the outcome. Then, we may define the subgradient oracle to be dependent on coordinate $d+1$ which encodes the whole sample. Our final construction will take a different $h$, which adds small strong convexity in this coordinate, for reasons next explained:

\paragraph{Working with any first order oracle} Notice that our statement is slightly stronger than what we so far illustrated. \cref{thm:main} states that, for \emph{any} subgradient oracle, GD will fail. For that, we need to be a little bit more careful, and we want to replace our function with a function that leads to the same guaranteed trajectory, but at the same time it should be differentiable at visited points. This will ensure a unique derivative, making the construction independent of the choice of (sub)gradient oracle.

Towards this goal, we start with the construction depicted so far and consider the set of all values, gradients, and points $\{f_j, g_j, w_j\}_{j\in J}$ that our algorithm may visit, for any possible time step and any possible sample, with our construction. Notice that, while this set may be big and even exponential, it is nevertheless finite. What we want is to interpolate a new function through these triplets. In contrast with our original construction, we require a differentiable function at the designated points. Notice, that such an interpolation will have the exact same behaviour when implementing GD on it (with the added feature that the oracle is well defined and unique).

The problem of convex interpolation is well studied, for example \citet{taylor2017smooth} shows sufficient and necessary conditions for interpolation of a smooth function. Our case is slightly easier as we do not care about the smoothness parameter. On the other hand we do require Lipschitzness of the interpolation. We therefore provide an elementary, self-contained, proof to the following easy to prove Lemma, (proof is provided in \cref{prf:interpolation})
\begin{lemma}\label{lem:interpolation}
    Let $G=\{f_j,g_j,w_j\}_{j\in J} \subseteq \mathbb{R}\times\mathbb{R}^d\times\mathbb{R}^d$ be a triplet of values in $\mathbb{R}$, and gradients and points in $\mathbb{R}^d$, such that $\|g_j\|\le L$. Suppose that for every $i,j\in J$: 
    \begin{equation}\label{eq:ineq} f_i \ge f_j + g_j^\top (w_i-w_j),\end{equation}
    and let \[I_{\textrm{diff}}= \{i: f_i=f_j+ g_j^\top(w_i-w_j) \Rightarrow g_i=g_j\}.\]
    
    Then there exists a convex $L$-Lipschitz function $\hat {f}$ such that for all $j\in J$: $\hat f(w_j)=f_j$, and for all $i\in I_{\textrm{diff}}$, $\hat f$ is differentiable at $w_i$ and:
    \[ \nabla f(w_i)= g_i.\]
\end{lemma}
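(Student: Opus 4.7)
The natural candidate is the maximum of affine minorants determined by the data:
\[
\hat f(w) := \max_{j \in J} \bigl[\, f_j + g_j^\top (w - w_j)\,\bigr].
\]
The plan is to show that this single object satisfies all four required properties, using only the hypothesis in \eqref{eq:ineq}, the uniform bound $\|g_j\|\le L$, and the definition of $I_{\textrm{diff}}$. Since the underlying set $J$ arising from the construction is finite (as noted in the text preceding the lemma), $\hat f$ is a finite max of affine functions, so one does not have to worry about measurability or compactness issues in the supremum.

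First I would verify convexity and Lipschitzness: $\hat f$ is a pointwise max of affine functions, hence convex; and since every summand is an affine function whose gradient $g_j$ has norm at most $L$, each summand is $L$-Lipschitz, so their pointwise max is $L$-Lipschitz. Next, I would check interpolation of the values. Fixing $i \in J$, on the one hand plugging $w=w_i$ into the summand indexed by $i$ gives $\hat f(w_i)\ge f_i+g_i^\top(w_i-w_i)=f_i$, and on the other hand hypothesis \eqref{eq:ineq} applied with the roles $j$ (generic) and $i$ gives $f_j+g_j^\top(w_i-w_j)\le f_i$ for every $j\in J$, so the max is at most $f_i$. Combining, $\hat f(w_i)=f_i$.

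The final step is the differentiability assertion at $w_i$ for $i\in I_{\textrm{diff}}$, and this is where I would be most careful. By the standard subdifferential calculus for a max of finitely many affine functions, the subdifferential of $\hat f$ at $w_i$ is
\[
\partial\hat f(w_i)=\operatorname{conv}\bigl\{\,g_j : j\in J,\ f_j+g_j^\top(w_i-w_j)=\hat f(w_i)=f_i\,\bigr\}.
\]
By the definition of $I_{\textrm{diff}}$, every index $j$ appearing on the right-hand side must satisfy $g_j=g_i$. Hence $\partial\hat f(w_i)=\{g_i\}$ is a singleton, which is equivalent to $\hat f$ being differentiable at $w_i$ with $\nabla\hat f(w_i)=g_i$.

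The only subtle point is justifying the subdifferential formula in the last step; the main obstacle, if any, is verifying that no index outside the ``active set'' can contribute to $\partial\hat f(w_i)$ through a limiting argument, but for a finite max of affine functions this is immediate from the fact that the pointwise max coincides locally near $w_i$ with the max over the active indices only. All other steps are routine checks directly from definitions.
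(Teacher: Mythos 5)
Your proposal is correct and follows essentially the same route as the paper: the same max-of-affine-minorants construction, the same verification of value interpolation from \eqref{eq:ineq}, and the same use of the subdifferential formula $\partial\hat f(w_i)=\operatorname{conv}\{g_j:\text{active at }w_i\}$ combined with the definition of $I_{\textrm{diff}}$ to conclude differentiability.
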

 
With \cref{lem:interpolation} at hand, consider the function
 
 \[h(w,z) = \frac{1}{2}(w(d+1))^2 +\alpha(z)\cdot w(d+1).\] 
 The above function encodes in $w(d+1)$ the sample and time-step as before. Moreover, because it is slightly strongly convex (in coordinate $d+1$), $w_1(d+1)\ne w_2(d+1)$ ensures that 
 \[  h(w_1,z) > h(w_2,z) + \nabla h(w_2,z)^\top(w_1-w_2),\] 
Then the term $h$ in $f$ ensures that the triples $\{f_j,g_j,w_j\}$ along the trajectory generate gradient vectors that satisfy strict inequality in \cref{eq:ineq} and in turn, our interpolation from \cref{lem:interpolation} will be differentiable at these points. There's some technical subtlety because the interpolation needs to also take the averaged iterate into account, but this is handled in a similar fashion.

In the next two sections we provide more formal statements of the two main ingredients: First, we define a setup of optimization with a sample-dependent first order Oracle and state a lower bound for the generalization error in this setup. The second ingredient is a reduction from the standard setup of first order optimization.

\subsection{Sample-dependent Oracle}\label{sec:sdo}
As discussed, the first step in our proof is to consider a slightly weaker setup where the first-order oracle may depend on the whole sample. Let us formally define what we mean by that. Define \[\S^T_m= \{\SS=(S_1,\ldots, S_t), S_i\in \cup_{i=1}^m \Z^{m}, t\le T\},\] the set of all subseqences of samples of size at most $m$.
Given a function $f(w,z)$, a sample dependent oracle, $\OSS$, is a finite sequence of first order oracles \[\OSS= \{\OS{t}(S; w,z)\}_{t=1}^T,\] that each receive as input a finite sample $S$, as well as $w$ and returns a subgradient: 

\[\OS{t}(S, w,z)\in \partial f(w,z).\] 
The sequence of samples can be thought of as the past samples that were observed by the algorithm. In the case of full-batch GD these will be the whole sample, and for SGD, each $S$ provided to $\OS{t}$ will be all previously observed samples.
Given $\SS\in \S_m^{T}$ Let us also denote
\begin{equation}\label{eq:ost} \OS{t}(\SS,w) = \frac{1}{|S_{t}|} \sum_{z\in S_t} \OS{t}(S_{1:t-1},w,z) \in \partial \left(\frac{1}{|S_t|} \sum_{z\in S_t} f(w,z)\right),\end{equation} 
where we let $\SS_{1:0}=\emptyset$, and $S_{1:t-1}= (S_1,\ldots, S_{t-1})$ is the concatenated subsample of all previously observed samples in the sequence. As discussed, working with a sample-dependent oracle is easier (for lower bounds). And indeed, our first result shows that, if the subgradient can be chosen in a way that depends on the sample, we can provide the desired lower bound. 
For fixed and known $\eta>0 $, $T$, a sample dependent first order oracle $\OSS$, and a sequence of samples $\SS=(S_1,S_2,\ldots, S_T)$, define $w_0=0$ and inductively: 
 \begin{align*} &w^\SS_{t} = \Pi\left[w^\SS_{t-1} - \eta\OS{t}(\SS,w^\SS_{t-1})\right], \\
  \textrm{and for every suffix  $\ss < T$:\quad}&\\
 &w_{\SS,\ss}^{GD} = \frac{1}{T-\ss}\sum_{t=\ss+1}^T w_{t}^\SS \labelthis{eq:s_dependent_gd}
\end{align*}

\begin{lemma}\label{lem:main}

    For every $m,d,T\ge 18$ and $\eta>0$ there are a distribution $D$, a $3$-Lipschitz convex function $f(w,z)$ in $\mathbb{R}^d$, as well as a sample dependent first order oracle $\OSS$ such that: if $\SS= (S,S,\ldots S)\in \S_m^T$ for $S\sim D^{m}$ i.i.d, then w.p. $1/2$, for every suffix averaging $\ss$:
    \[ F(w_{\SS,\ss}^{GD}) - F(0) \ge \theboundpre.\]

\end{lemma}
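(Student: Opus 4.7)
The plan is to construct $f(w,z) = g(w,z) + N(w) + h(w,z)$ as sketched in \cref{sec:overview}, and design a sample-dependent oracle $\OSS$ that reads the sample off of $S$ and drives the GD iterates to an overfitting minimum. I would split the construction so that $g$ supplies the $d/m$ factor, $N$ supplies the $\eta\sqrt{\min\{d^3,T\}}$ factor, and $h$ is a small auxiliary term (needed later for the interpolation step of \cref{thm:main}, but benign here). I would parameterize the construction by a block-size $k \asymp \min\{d/m,\, 1\}$ and a number of blocks $b \asymp \min\{T^{1/3}, d/k\}$, so that the two factors in the bound can be controlled independently.

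\textbf{Step 1: Feldman-style bad minimum.} Using a Feldman-type construction (as in \cite{feldman2016generalization, carmon2023sample, schliserman2024dimension}), I would define $g(w,z)$ over $b$ disjoint blocks of $k$ coordinates each. The domain $\Z$ is a family of subsets and the distribution $D$ is such that, with constant probability, a typical sample $S \sim D^m$ admits a set $I(S)$ of ``uncovered'' blocks of size $\Omega(b)$, and an overfitting point $w_S \in \{0, 1/\sqrt{bk}\}^d$ supported on $I(S)$ satisfies $F_S(w_S) - F(w_S) \le -\Omega(k/m)$ relative to $F(0)$, i.e.\ $w_S$ looks like a great empirical minimizer but actually pays $\Omega(d/m)$ on the population. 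I would fold in a scaling so the eventual per-coordinate gain on the population loss is $\Omega(k/m) = \Omega(\min\{d/m,1\})$.

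\textbf{Step 2: Nemirovski dynamics over blocks.} I would use the block-Nemirovski function $N(w) = \max\{0, \max_i -W(i), \max_{i \le j} W(j) - W(i)\}$ where $W(i)$ is the sum (or average) of the $i$-th block of coordinates. The sample-dependent oracle $\OS{t}(S;\cdot,\cdot)$ for the term $N$ is defined by the case analysis of \cref{sec:overview}: it chooses, at each step, a subgradient $\tfrac{1}{\sqrt{k}}(e_{i+1}^{\mathrm{blk}} - e_i^{\mathrm{blk}})$ or $-\tfrac{1}{\sqrt{k}} e_{i}^{\mathrm{blk}}$ depending on the current iterate, but crucially restricted to blocks belonging to $I(S)$ (coordinates inside the block are spread so the per-coordinate step is $\eta/\sqrt{k}$). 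By induction on $t$ I would show that after $T' = \min\{\Theta(b^3), T\}$ iterations each block in $I(S)$ has been ``filled up'' to magnitude roughly $\eta b /\sqrt{k}$, so the averaged (and every suffix-averaged) iterate $w^{GD}_{\SS,\ss}$ satisfies $w^{GD}_{\SS,\ss}(i) \gtrsim \min\{\eta b/\sqrt{k}, 1/\sqrt{bk}\}$ uniformly on $I(S)$, up to projection onto the unit ball. This is the main technical obstacle: one must track the dynamics of the block-Nemirovski function carefully (as in \cref{fig:nemonster}) and verify that \emph{every} suffix average sees iterates lying in the cone of overfitting directions, not just the full average.

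\textbf{Step 3: Population lower bound.} Combining the two: since the component of $w^{GD}_{\SS,\ss}$ along each block in $I(S)$ is $\Omega\!\left(\min\{\eta\sqrt{T'}, 1\}/\sqrt{bk}\right)$ and the Feldman construction charges $\Omega(k/m)$ per overfitting block, summing over the $\Omega(b)$ blocks in $I(S)$ gives the population excess
\[ F(w^{GD}_{\SS,\ss}) - F(0) \;\gtrsim\; \min\!\left\{\tfrac{d}{m},1\right\} \cdot \min\!\left\{\eta\sqrt{\min\{d^3,T\}},\,1\right\}, \]
matching $\theboundpre$ up to the explicit constants. I would verify Lipschitzness by bounding $\|\partial g\|, \|\partial N\|, \|\partial h\| \le 1$ each so the sum is $3$-Lipschitz.

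\textbf{Step 4: Oracle validity.} Finally, I would check that the proposed subgradients really lie in the respective subdifferentials (using that when $N$ is nondifferentiable at the visited iterates, any standard-basis-combination I pick is a legitimate subgradient), and that $\OS{t}$ is a function of $(S;w,z)$ alone (it depends on $S$ because it uses $I(S)$, but not on future samples), which is exactly what the sample-dependent oracle framework of \cref{eq:ost} permits. The main obstacle, beyond the inductive dynamics in Step~2, is calibrating the constants ($1032$, $136$, etc.\ in $\theboundpre$) so the interaction between projection onto the unit ball, the block width $k$, and the number of blocks $b$ is tight simultaneously across all regimes of $(d,m,T,\eta)$.
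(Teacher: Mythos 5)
Your overall architecture is right --- a Feldman-type piece $g$ plus a block-Nemirovski piece $N$, driven by a sample-dependent oracle that steers the iterates into the support of an ``uncovered'' direction --- and you correctly identify the main technical obstacle (verifying that \emph{every} suffix average lands in the overfitting cone; the paper handles this by arranging the trajectory to reach its final state by time $T'\le T/17$, so any suffix average retains a $\frac{16}{17}$ fraction of the final iterate's component along $v^\star$). Two structural points, however, are off.

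First and more importantly, you misattribute the $\min\{d/m,1\}$ factor. You want it to come from a block-size parameter $k\asymp\min\{d/m,1\}$ and a ``charge $\Omega(k/m)$ per overfitting block'' summed over $\Omega(b)$ blocks. That accounting does not work: when $d<m$ your $k$ is $<1$ and is no longer a valid block size, and when $b\asymp T^{1/3}<d$ the sum over blocks produces $T^{1/3}/m$, not $d/m$. In the paper the $d/m$ factor has nothing to do with the geometry of the blocks. It comes from the \emph{Bernoulli parameter} $\epsilon$ of the Feldman distribution: each $v\in\V$ is placed in the random subset $V\sim D_\epsilon$ independently with probability $\epsilon$, so (a) the population loss along $v^\star$ scales linearly in $\epsilon$, and (b) $\epsilon\approx d/(516m)$ is the largest value at which, with probability at least $1/2$, some $v^\star\in\V$ with $|\V|\ge e^{d/258}$ escapes all $m$ samples. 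The block size in the paper is set by $d\le k(T/17)^{1/3}<2d$, i.e.\ $k\approx d/T^{1/3}$, and its only role is to stretch the Nemirovski dynamics to run for $\Theta(T)$ iterations so the trajectory attains magnitude $\Theta(\eta\sqrt{T})$; it is entirely decoupled from the sample-size dependence. Your proposal conflates these two scales, and as a result the bound you claim in Step~3 does not actually follow from your parameterization.

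Second, a minor point: the function $f$ in the lemma is $g+\alpha N$ over $\reals^d$ only; the $h$ term is not part of this construction at all. It is introduced separately in the reduction step (\cref{lem:reduce}) as a strongly convex term in a \emph{new} coordinate $d+1$, whose purpose is to make the interpolated function differentiable along the trajectory. Folding $h$ into the lemma here --- even if ``benign'' --- obscures the clean division of labor the paper relies on: \cref{lem:main} works with a sample-dependent oracle and needs no differentiability, while \cref{lem:reduce} converts that to an ordinary oracle.
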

The proof of \cref{lem:main} is provided in \cref{prf:lem:main}. We next move to describe the second ingredient of our proof.
\subsection{Reduction to sample-dependent oracles}\label{sec:reduce}
As discussed, the second ingredient of our proof is a reduction to the sample-dependent setup. Instead of using a perturbation function as in \cite{amir2021sgd}, we take a more black box approach and show that, given a sample dependent first order oracle, there exists a function that basically induces the same trajectory. Proof is provided in \cref{prf:reduce}.
\begin{lemma}\label{lem:reduce} Suppose $q \in \mathbb{R}^T$, $\|q\|_\infty \le 1$. And suppose that $f(w,z)$ is a convex, $L$-Lipschitz, function over $w\in \mathbb{R}^d$, let $\eta>0$, let $\OSS$ be a sample dependent first order oracle, and for every sequence of samples $\SS=(S_1,S_2,\ldots, S_T)$ define the sequence $\{w_{t}^\SS\}_{t=1}^T$ as in \cref{eq:s_dependent_gd}.
    
    Then, for every $\epsilon>0$ there exists an 
    $L+1$ Lipschitz convex function\footnote{i.e. $w\in \mathbb{R}^d$ and $x\in \mathbb{R}$} $\bar f((w,x),z)$ over $\mathbb{R}^{d+1}$ (that depends on $q,f,T,\eta,m,\OSS,\epsilon$).
    
    such that
    for any first order oracle $\O_z$ for $\bar f$, define $u_0=0 \in \mathbb{R}^d$ and $x_0=0 \in \mathbb{R}$, and:
    \[ (u_t, x_t)  = (u_{t-1},x_{t-1}) - \frac{\eta}{|S_t|} \sum_{z\in S_t} \O_z((u_t,x_t))\] then if we define:  
    \[ u_{\q}= \sum_{t=1}^T \q(t)u_t, ~\textrm{and},\quad x_{\q}= \sum_{t=1}^T \q(t)x_t,~\textrm{and} ~w_{\q}^\SS= \sum_{t=1}^T \q(t) w_t^\SS.\] 
    then, we have that $u_{\q}=w_{\q}^\SS$ and:
    \[ |\bar f((u_{\q},x_{\q}),z)- f(w_{\q}^\SS,z)| \le \epsilon.\]
    and,
    \[ |\bar f((0,0),z)- f(0,z)| \le \epsilon.\]    
\end{lemma}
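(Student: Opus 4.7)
Following the outline in \cref{sec:overview}, the plan is to introduce a single auxiliary coordinate $x\in\mathbb{R}$ whose trajectory under GD uniquely encodes the past sample history, so that a standard (sample-independent) first-order oracle can ``decode'' the sample from the iterate and return exactly $\OS{t}(\SS_{1:t-1},w,z)$. Concretely, I would fix an injective hash $\alpha:\Z\to(0,1]$ chosen generically enough that the map $(t,\SS_{1:t})\mapsto x_t^\SS$ described below is injective across the finite collection of feasible pairs, and for a small scalar $\gamma>0$ to be fixed later set
\[
h((w,x),z)\;:=\;\gamma\Bigl(\tfrac12 x^2 + \alpha(z)\,x\Bigr).
\]
With this $h$, the $x$-component $\gamma(x+\alpha(z))$ of the intended oracle produces on the auxiliary coordinate the deterministic recursion $x_t^\SS=(1-\eta\gamma)x_{t-1}^\SS-\eta\gamma\,\alpha(S_t)$, and by genericity of $\alpha$ the values $x_t^\SS$ are pairwise distinct across all feasible $(t,\SS)$.

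\textbf{Interpolation.} For each $z\in\Z$, I would apply \cref{lem:interpolation} to the finite triple set consisting of every trajectory point $(w_t^\SS,x_t^\SS)$ and every averaged point $(w_\q^\SS,x_\q^\SS)$, with prescribed value $f(w,z)+h((w,x),z)$ and prescribed subgradient $(g_w,\gamma(x+\alpha(z)))$, where at trajectory points $g_w:=\OS{t}(\SS_{1:t-1},w,z)$ and at averaged points $g_w$ is any element of $\partial f(w,z)$. The interpolation inequality \cref{eq:ineq} decomposes additively: convexity of $f(\cdot,z)$ handles the $w$-coordinates, and a direct expansion gives
\[
h((w_i,x_i),z)-h((w_j,x_j),z)-\gamma(x_j+\alpha(z))(x_i-x_j)\;=\;\tfrac{\gamma}{2}(x_i-x_j)^2\;\ge\;0,
\]
strict precisely when $x_i\neq x_j$. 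Combined with the injective encoding, this strictness holds at every pair of distinct visited trajectory points, placing all such points in $I_{\textrm{diff}}$; hence $\bar f((w,x),z)$ will be differentiable there with exactly the prescribed gradient. Choosing $\gamma\le 1/2$ and using $|x_t^\SS|,|\alpha(z)|\le 1$ bounds every prescribed gradient by $L+1$ in norm, so the lemma outputs an $(L+1)$-Lipschitz convex $\bar f$.

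\textbf{Finishing and main obstacle.} A short induction then yields both conclusions: whenever $(u_{t-1},x_{t-1})=(w_{t-1}^\SS,x_{t-1}^\SS)$, this point lies in $I_{\textrm{diff}}$, so any first-order oracle must return the unique prescribed gradient; applying the GD update then produces $(u_t,x_t)=(w_t^\SS,x_t^\SS)$, and in particular $u_\q=w_\q^\SS$. Since $(u_\q,x_\q)$ is included in the interpolation set with value $f(w_\q^\SS,z)+h((u_\q,x_\q),z)$, the discrepancy is $|h((u_\q,x_\q),z)|=O(\gamma T^2)\le\epsilon$ provided $\gamma$ is taken small enough, and at $(0,0)$ the $h$-term vanishes exactly. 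The main obstacle, I expect, will be guaranteeing differentiability at every trajectory point simultaneously: since the $w$-coordinates $w_t^\SS$ may coincide across different histories, nothing about $f$ alone enforces strict convexity in \cref{eq:ineq}, and it is precisely the injective $x$-encoding together with the strict convexity of $h$ in $x$ that makes the interpolation inequality strict at every pair of distinct visited points and thereby pins down a unique subgradient that any first-order oracle must return.
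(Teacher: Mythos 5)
Your proposal follows essentially the same route as the paper: encode the sample history in an auxiliary scalar coordinate $x$ via a strictly-convex quadratic $h$, interpolate the resulting (value, gradient, point) triplets using \cref{lem:interpolation}, use the strict convexity in $x$ together with injectivity of the $x$-trajectory to place all visited trajectory points in $I_{\textrm{diff}}$, and then run an induction to show GD on $\bar f$ reproduces the original trajectory. One subtlety you elide: when you argue differentiability by checking ``every pair of distinct visited trajectory points,'' the definition of $I_{\textrm{diff}}$ in \cref{lem:interpolation} requires the strict-inequality check against \emph{all} triplets in the interpolation set, which in your construction also includes the averaged points $(w_\q^\SS,x_\q^\SS)$. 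So in addition to the trajectory-to-trajectory injectivity $(t,\SS_{1:t})\mapsto x_t^\SS$ being one-to-one, you also need $x_\q^{\SS'}\neq x_t^\SS$ for every $t<\max\{t':\q(t')\ne 0\}$ and every $\SS,\SS'$ — this is exactly item 2 of the paper's \cref{clm:xtrajectory}, and you would need to fold it into your ``generically chosen'' $(\alpha,\gamma)$. The paper makes this explicit by treating the averaged point as a formal index $t=T{+}1$ and proving both injectivity properties via a polynomial-degree argument; your appeal to genericity is morally the same but would need to be stated to cover this case, otherwise a coincidence $x_\q^{\SS'}=x_t^\SS$ with mismatched $w$-gradients would break the differentiability claim at $w_t^\SS$ and hence the inductive step.
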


\section{Proof of \cref{thm:main}}
The proof is an immediate corollary of \cref{lem:main,lem:reduce}, which we prove in \cref{prf:lem:main,prf:reduce} respectively. To see that \cref{thm:main} indeed follows from these Lemmas, start with $\eta, d,m,T$ that satisfy the conditions. Let $f(w,z)$ be the function and $\OSS$ the sample dependent first order oracle, whose existence follows from \cref{lem:main} with suffix $\ss=0$. And let $\bar f$ be the function whose existence follows from \cref{lem:reduce} to some arbitrarily small $\epsilon_0$, with $\q(t)=\frac{1}{T}$ for all $t$. It is easy to see that if we apply GD on $\bar f$ and define its output $(u^{GD},x^{GD})$ then: $(u^{GD},x^{GD}) = (u_{\q}, x_{\q})$, and $w^\SS_{\q} = w_\SS^{GD}$.

Then, we have that w.p. $1/2$:

\begin{align*} \bar F((u^{GD},x^{GD})) - \bar F(0) & = \bar F(u_q,x_q)- \bar F(0) \\ &\ge  F(w_\SS^{GD}) -2\epsilon_0 - F(0)  
\\
&\ge 
\theboundpre - 2\epsilon_0
 \\
 &\ge 
 \thebound
.\end{align*}
Where in the last equation, we assume $\epsilon_0$ to be sufficiently small.

Notice, that by the same argument, by taking any suffix $\ss<T$, and setting $q(t)=0$ for $t\le \ss$, and $q(t)=\frac{1}{T-\ss}$ for $t\ge \ss+1$, we can obtain the following stronger result for any suffix averaging:

\begin{theorem}\label{thm:main_suffix}
For every $d\ge 4096, T\ge 10, m\ge 1$ and $\eta>0$ and suffix $\ss <T$, there exists a distribution $D$, and a $4$-Lipschitz convex function $f(w,z)$ in $\mathbb{R}^{d}$, such that for \emph{any} first order oracle of $f(w,z)$, with probability $1/2$, if we run GD with $\eta$ as a learning rate then:
 \[ F\left(\frac{1}{T-\ss}\sum_{t=\ss+1}^T w_t^\SS \right) - F(0) \ge \thebound.\]

\end{theorem}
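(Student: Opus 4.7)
The plan is to replay the proof of \cref{thm:main} verbatim, except that in the application of \cref{lem:reduce} we choose a weighting vector $\q$ tailored to the desired suffix rather than the uniform vector. Specifically, given $\eta,d,m,T$ and $\ss<T$, I would first invoke \cref{lem:main} to obtain a distribution $D$, a $3$-Lipschitz convex function $f(w,z)$ on $\mathbb{R}^d$, and a sample-dependent oracle $\OSS$ for which the suffix-averaged iterate $w_{\SS,\ss}^{GD}$ overfits with probability $1/2$ (the lemma already guarantees this bound simultaneously for \emph{every} suffix $\ss$). I would then feed $f$ and $\OSS$ into \cref{lem:reduce} with the weight vector $\q\in\mathbb{R}^T$ defined by $\q(t)=0$ for $t\le \ss$ and $\q(t)=\tfrac{1}{T-\ss}$ for $t> \ss$. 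Since $T-\ss\ge 1$, we have $\|\q\|_\infty\le 1$, so the hypothesis of the lemma holds and we obtain a $4$-Lipschitz convex function $\bar f$ on $\mathbb{R}^{d+1}$, with approximation slack $\epsilon_0>0$ that we are free to make arbitrarily small.

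\textbf{Key identification.} The point of choosing $\q$ this way is that, for any first-order oracle of $\bar f$, the quantity $u_\q=\sum_t \q(t)u_t$ produced by GD is exactly the suffix average $\tfrac{1}{T-\ss}\sum_{t=\ss+1}^T u_t$, which is the output appearing in the theorem statement. By the conclusion of \cref{lem:reduce} this vector coincides with $w_\q^\SS = \tfrac{1}{T-\ss}\sum_{t=\ss+1}^T w_t^\SS = w_{\SS,\ss}^{GD}$, and the values of $\bar f$ at the corresponding point and at the origin agree with those of $f$ up to $\epsilon_0$. Taking expectations over $z\sim D$ and combining the two guarantees, the high-probability bound of \cref{lem:main} transfers to $\bar f$:
\begin{align*}
\bar F\!\left(\tfrac{1}{T-\ss}\sum_{t=\ss+1}^T (u_t,x_t)\right) - \bar F(0) \;&\ge\; F(w_{\SS,\ss}^{GD}) - F(0) - 2\epsilon_0 \\
&\ge\; \theboundpre - 2\epsilon_0 \;\ge\; \thebound,
\end{align*}
where the last inequality holds once $\epsilon_0$ is chosen small enough to absorb the slack between the $\sqrt{2}$ and the $2$ in the denominator of the two bounds.

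\textbf{Main obstacle.} This proof is essentially a bookkeeping adjustment of the proof of \cref{thm:main}: \cref{lem:main} already covers every suffix, and \cref{lem:reduce} is stated for an arbitrary weighting $\q$, so neither lemma needs modification. The only things to verify are that the chosen $\q$ satisfies $\|\q\|_\infty\le 1$ (immediate from $T-\ss\ge 1$) and that $u_\q$ really equals the suffix average in the theorem statement (immediate from the support of $\q$). Consequently there is no substantive obstacle beyond unpacking the definitions; all of the technical work has already been done in \cref{lem:main,lem:reduce}.
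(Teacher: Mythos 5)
Your proposal follows exactly the paper's argument: Lemma \ref{lem:main} already holds simultaneously for every suffix $\ss$, and Lemma \ref{lem:reduce} is stated for an arbitrary weight vector $\q$ with $\|\q\|_\infty\le 1$, so choosing $\q(t)=0$ for $t\le \ss$ and $\q(t)=\tfrac{1}{T-\ss}$ for $t>\ss$ and absorbing the $2\epsilon_0$ slack in the constant is precisely what the paper does. The proof is correct and takes essentially the same approach as the paper.
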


\subsection{Proof of \cref{lem:main}}\label{prf:lem:main}
For simplicity we will assume that $d=2^{n}$ for some $n\in \mathbb{N}$, the final result will be obtained by embedding a construction in a subspace of size at least half the original dimension.

We start by recalling Feldman's construction \cite{feldman2016generalization}: There exists a set $\V \subseteq \{0,1\}^d$, such that:
\begin{equation}\label{eq:V} \left(\forall v_1\ne v_2\in \V,~ v_1\cdot v_2 \le \frac{5d}{16}\right) \textrm{,~and}~\left( \|v\|^2\ge \frac{7d}{16}\right)\textrm{,~and~}\left( |\V|> e^{d/258}\right).\end{equation}

Indeed, suppose we pick randomly $w\in \{0,1\}^d$ according to probability $P$ where each coordinate $P(w(i))=1$ with probability $1/2$ (independently). Then, by Hoeffding's inequality for two $w_1,w_2\sim P$ independently:
 \[ P\left (w_1\cdot w_2 > \frac{d}{4} + \frac{d}{16} \right) \le e^{-\frac{d}{128}},\]
 \[ P\left(w_1 \cdot w_1 < \frac{d}{2} - \frac{d}{16}\right) \le e^{-\frac{d}{128}}.\]
 Thus, picking $\V$ elements i.i.d according to $P$, randomly, of size $|\V|\ge  e^{d/258}$ we can show by union bound that with positive probability, all $|\V|^2$ pairs in $\V$ will satisfy the requirement.
 Next,  we define a distribution $D_\epsilon$ supported on subsets of $\V$ such that for a random variable $V\subseteq \V$ each $v\in V$ w.p. $\epsilon$ independently. 
We start by assuming that $T\le \frac{1}{17} d^3$ (the case $T>d^3$ is handled at the end), and let $k\in \mathbb{N}$ be such that:

\[ d \le k\left(\frac{T}{17}\right)^{1/3} < 2d.\]
One can show that without loss of generality we can assume $k$ is also a power of $2$ (in particular, $d$ is divisble by $k$ for large enough $T$).
We next follow the idea depicted in \cref{sec:overview}, but we want to handle the case $k\gg 1$. For that, we redefine the function in \cref{eq:nemdef1}, and take blocks of coordinates. To simplify notations, let us define for two set of indices $I,J$ of elements in $[d]$: $I=\{i_1,\ldots,i_k\}$, $J= \{j_1,\ldots, i_k\}$, $I\prec J$ if $\max\{i\in I\} < \min\{j\in J\}$. and we will also write:
\[ e_{I} = \frac{1}{\sqrt{|I|}}\sum_{i\in I} e_i, \quad \textrm{and}, w(I) = w\cdot e_I = \frac{1}{\sqrt{I}}\sum_{i\in I} w(i).\]
then we define our final function as:
\begin{equation}\label{eq:nemonsterdef}
N(w)=\max\left\{0,\max_{|I|= k}\{-w(I)\},\max \left\{- (w(I)- w(J)): I\prec J, |I|=|J|=k\right\}\right\}
\end{equation}

Define $\alpha=\min\{\frac{1}{\eta \sqrt{2T}},1\}$, and let:

\[f(w,V)= g(w,V) + \alpha N(w).\]
where $N$ is defined in \cref{eq:nemonsterdef}, and $g$ is defined to be Feldman's function with a suitable choice of threshold: 
\begin{equation}\label{eq:gs} g(w,V) = \frac{1}{\sqrt{d}} \max_{v\in V}\left\{ \frac{45 \eta\alpha  d^2}{2\cdot 16^2k^{1.5}}  ,  w\cdot v\right\}.\end{equation}
Notice that $N$ is $2$-Lipschitz, and $g$ is $1$-Lipschitz.

To obtain the trajectory, we next define a sample dependent oracle. We only define it for samples $\SS$ such that there exists $v^\star \notin V_i$ for all $V_i\in S$ (define it arbitrarily to any other type of sample). Let $\mathcal I=\{i_1,\ldots, i_{d'}\}$ be a set of $\frac{7d}{16}$ 
indices such that $v^\star(i_j)\ne 0$. Divide the elements of $\mathcal I$ into $d'/k$ subsets. Namely, let 
\[ I_j = \{i_{(j-1)\cdot k+1},i_{(j-1)\cdot k+2}, \ldots,i_{j\cdot k}\},\quad j=1,\ldots, d'/k.\]

We start by defining only an oracle for the function $\alpha N$. We will later show that the trajectory induced by this oracle stays in the minima of $g$, and that will show that, for our purposes, we can choose the same oracle for the whole function $f$.
We denote by $\OS{t}_{\alpha N}$ the sample dependent oracle for $\alpha N$, and we first define: 
\[ \OS{1}_{\alpha N}(\emptyset,0)=0,\]

 Next, we define for $t>1$. For any $w$ such that $0\notin \partial N(w)$, define it arbitrarily. If $0\in \partial N(w)$ we define it as follows: 

\begin{itemize}
\item If there is a multi-index $I_j$ that $w(I_j)=w(I_{j+1})> \eta\alpha$: Then:
\[ \OS{t}_{\alpha N}(S,w)= \alpha(e_{I_{j+1}} - e_{I_j}).\]
\item If there is no such multi-index, and if $I_j$ is the minimal multiindex such that $w(I_j)=0$, set:
\[ \OS{t}_{\alpha N}(S,w) = - \alpha(e_{I_j}).\]
\item If both conditions cannot be met, then:
\[ \OS{t}_{\alpha N}(S,w) = 0.\]
\end{itemize}
The trajectory of this dynamic is depicted in \cref{fig:nemonster}, for the case $k=1$. In the general case, we can think of each coordinate in \cref{fig:nemonster} as a block of size $k$. Now we assume that $\{w_t^\SS\}_{t=1}^T$ follows the trajectory depicted in \cref{eq:s_dependent_gd} with that choice of Oracle. It can be seen that the update step is such that after $T'=1+ \sum_{t'=1}^{d'/k} \sum_{t''=1}^{t'} t''$ rounds, we will have that for every $i\in I_t$:

\begin{equation}\label{eq:nemonster} w_{{T'}}^\SS(i) = \alpha\eta\sqrt{k} (d'/k+1-t),\end{equation}
and that for $t>T'$: $w^\SS_{t} = w^\SS_{T'}$

Moreover, one can show that $w_t$ is non zero only in coordinates $i\in \mathcal I$, and that for any subset $I_B\subseteq I$ such that $|I|=B$:
\begin{equation}\label{eq:decreasing} \sum_{i\in I_B} w_{t}^\SS(i) \le \max\left\{ \sum_{i\in I_B} w_T^\SS(i): I_B\subseteq I, |I|=B\right\}\end{equation}

 A formal proof is provided in \cref{prf:nemonster}.
Notice that:

\[ 1+ \sum_{t'=1}^{d'/k} \sum_{t''=1}^{t'} t'' < \frac{{d'}^3}{k^3} \le \frac{1}{5}T.\]

Next, for any $v\ne v^\star$, we have that:

\begin{align*} w_t^\SS \cdot v &\le  \sum_{i=1}^{d'/k}  w_t^\SS(i_t) \mathbf{1}\{v(i_t)=1\}\\
&\le  \max \left\{\sum_{i\in I_B} w_{T}^\SS(i): I_B\subseteq I, |I|\le \frac{5d}{16}\right\} &\cref{eq:decreasing,eq:V}\\
&\le \sum_{t=1}^{\frac{5d}{16k}} \sum_{i\in I_t} w_T^\SS(i)\\
&\le 
\sum_{t=1}^{\frac{5d}{16k}} \sqrt{k}\eta\alpha(\left(d'/k\right)+1-t) &\cref{eq:nemonster}\\
&\le
\sum_{t=0}^{\frac{5d}{16k}} \sqrt{k}\eta\alpha(\left(d'/k\right)-t) \\
&\le \sqrt{k}\eta\alpha \left( \frac{5d}{16k}\frac{d'}{k}- \frac{1}{2}\left(\frac{5d}{16k}\right)^2\right)\\
\\
&\le \sqrt{k}\eta\alpha  \frac{45d^2}{2(16k)^2}& d' =\frac{7d}{16}\\
.\end{align*}
As such $0 \in \partial g(w_t^\SS,V_i)$ for all $V_i$, and we can define $\OS{t}$ so that
\[\OS{t}(S, w_t^\SS,V_i) = \OS{t}_{\alpha N}(S,w_t^\SS).\]

And we have that, $\{w_t^\SS\}$ is the trajectory obtained from \cref{eq:s_dependent_gd} with respect to this oracle of $g$ also. We also have:

\[ w_{T'}^\SS\cdot v^\star =  \sqrt{k}\eta\alpha  \sum_{t=1}^{d'/k}  (d'/k+1-t) = \sqrt{k}\eta\alpha  \sum_{t=1}^{d'/k}  t  \ge \frac{\sqrt{k} \eta\alpha}{2}\frac{{d'}^2}{k^2}  \ge \sqrt{k}\eta\alpha \frac{49 d^2}{2(16k)^2} .\]
And because $T'\le \frac{1}{17}T$, for every suffix $\ss\in [T]$:
\[ v^\star \cdot w_{\SS,\ss}^{GD} \ge \frac{16}{17} w_{T'}^\SS\cdot v^\star \ge \sqrt{k}\eta\alpha \frac{46 d^2}{2(16k)^2}.\] 
Recall that we assume $\frac{136d^3}{k^3} \ge T$: 
\begin{equation}\label{eq:genbound} F(W_{\SS,\ss}^{GD}) - F_S(0) \ge
\epsilon \left(\frac{46\sqrt{k}\alpha\eta d^{1.5}}{2(16k)^2} -\frac{45\sqrt{k}\alpha\eta d^{1.5}}{2(16k)^2}\right)\ge \epsilon \frac{\alpha\eta }{2\cdot 16^2}\left(\frac{d}{k}\right)^{1.5}
\ge \frac{\epsilon}{\sqrt{2} \cdot 272 \cdot 16^2}\min\left\{\eta\sqrt{T},1 \right\}\end{equation}

\cref{eq:genbound} lower bounds the generalization under the event that there exists $v\in \V$ such that $v\notin V_i$ for every $i=1,\ldots, m$. Now assume $\epsilon = \min\{\frac{d}{516m},\frac{1}{4}\}$, then for every $v$, using the inequality $(1-\epsilon)\le e^{-2\epsilon}$ for $\epsilon<1/2$:
\[ P(v\notin \cap_{i=1}^m  V_i)= (1-\epsilon)^m\ge e^{-2\epsilon\cdot m} \ge e^{-2d/516},\]
and,
\[ P(\exists v, \notin \cap_{i=1}^m  V_i) \ge 1 - (1 - e^{-2d/516})^{|V|} \ge 1 - (1 - e^{-d/258})^{e^{d/258}}\ge 1- e^{-1} \ge 1/2.\]

So far we assume that $d=2^n$, notice that if $T< \frac{1}{8\cdot 17} d^3$, we can find a subspace of size $d_1 > \frac{d}{2}$ so that $T< \frac{1}{8\cdot 17}d_1^3$, and we obtain our final result by embedding our construction in this subspace.

\paragraph{When $T> \frac{1}{17\cdot 8} d^3$}
Notice that if we take the construction with $T= \floor{\frac{1}{17\cdot 8} d^3}$ then:
\[\OS{T}(w^\SS_T)=0,\]
hence we can use the above construction for any $T'>T$, and the subdifferential oracle is defined for every iteration above $T$ as returning $0$, and we obtain a similar analysis to the case that $T=\floor{d^3/136}$, and our bounds yield as in \cref{eq:genbound}:
\begin{equation*} F(w_{\SS,\ss}^{GD}) - F_S(0) \ge \frac{\epsilon}{\sqrt{2}\cdot 272\cdot 16^2} \min\left\{\eta\sqrt{\floor{d^3/136}},1\right\}\end{equation*}

\subsection{Proof of \cref{lem:reduce}}\label{prf:reduce}
The main ingredient in our proof is the following claim whose proof is provided in \cref{prf:xtrajectory}. To state the claim we will need a few notations. First, for any two tuples of sample-sequences in $\S^T_m$, and $t>0$ $(\SS,t)$ and $(\SS',t)$, let us denote $(\SS,t)\equiv (\SS',t')$ if $t=t'$ and $S_{1:t} = S'_{1:t}$, namely the prefixes of the sample sequences agree up to point $t'=t$. Second, for a mapping $\alpha: \Z\to [0,1]$, from sample points to real numbers, and a sample $S$, let us denote
        \[ \alpha(S)= \frac{1}{|S|}\sum_{z\in S} \alpha(z).\]
    \begin{claim}\label{clm:xtrajectory}
For every $\eta,\gamma$ and  $\alpha$ define for every sequence of samples $\SS= (S_1,\ldots, S_T)$, and $t$ inductively: $x^\SS_0=0$ and 
        \begin{equation}\label{eq:xts} x^\SS_t = (1-\gamma \eta) x^\SS_{t-1} - \gamma \eta \alpha(S_t).\end{equation}
        and define:
        \begin{equation}\label{xss} x_{\q}^{\SS}= \sum_{t=1} \\q(t) x^{\SS}_{t} \end{equation}
        Then, 
        for any $\epsilon>0$, there is a choice of $\gamma<\epsilon $ and $\alpha$ such that:
        \begin{enumerate}
            \item\label{it:xts1} For $(\SS,t)\not\equiv (\SS',t')$: $ x^\SS_t \ne x^{\SS'}_{t'}.$
            \item\label{it:xts2} For every $\SS,t$, such that $t<\max\{t': q(t')\ne 0\}$ and $\SS'$: $x_{t}^\SS \ne \sum_{t=1}^T \q(t) x_t^{\SS'}$.
        \end{enumerate}
    \end{claim}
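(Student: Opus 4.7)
The plan is to unroll the recursion in \cref{eq:xts} into a polynomial in $\beta := 1-\gamma\eta$, and then to pick $\alpha$ and $\gamma$ so that the finitely many possible algebraic collisions are all avoided. A direct unrolling gives
\[ x_t^{\SS} \;=\; -\gamma\eta \sum_{s=1}^{t} \beta^{t-s}\,\alpha(S_s), \]
so, up to the prefactor $-\gamma\eta$, each $x_t^{\SS}$ is a polynomial in $\beta$ of degree $t-1$ whose coefficients (from high degree to low) are exactly the subsample averages $\alpha(S_1),\ldots,\alpha(S_t)$.

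First I would choose $\alpha$. The family $\mathcal{F} := \bigcup_{i=1}^{m}\Z^{i}$ of possible subsamples is finite, and each potential equality $\alpha(S)=\alpha(S')$ for distinct $S,S'\in\mathcal{F}$ imposes a nontrivial linear constraint on the vector $(\alpha(z))_{z\in\Z}$, cutting out a proper hyperplane in $(0,1)^{|\Z|}$. Taking $\alpha:\Z\to(0,1)$ outside the finite union of these hyperplanes (for instance, with $\alpha(z)$-values $\mathbb{Q}$-linearly independent) ensures that $S\mapsto\alpha(S)$ is injective on $\mathcal{F}$.

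Next I would verify that the relevant differences of polynomials in $\beta$ are nonzero. For property \cref{it:xts1}: if $t\ne t'$, say $t'>t$, then $x_t^{\SS}-x_{t'}^{\SS'}$ has leading term $\gamma\eta\,\alpha(S'_1)\,\beta^{t'-1}$, which is nonzero since $\alpha(S'_1)>0$; if instead $t=t'$ but $S_{1:t}\ne S'_{1:t}$, letting $s^{\star}$ be the first index of disagreement, the top $s^{\star}-1$ coefficients cancel while the coefficient of $\beta^{t-s^{\star}}$ equals $\alpha(S_{s^{\star}})-\alpha(S'_{s^{\star}})\ne 0$ by the choice of $\alpha$. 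For property \cref{it:xts2}: set $T^{\star}:=\max\{t:\q(t)\ne0\}$; then $x_{\q}^{\SS'}$ is, up to the factor $-\gamma\eta$, a polynomial of degree $T^{\star}-1$ in $\beta$ with leading coefficient $\q(T^{\star})\alpha(S'_1)\ne 0$, whereas $x_t^{\SS}$ has $\beta$-degree only $t-1<T^{\star}-1$, so the difference is a nonzero polynomial of degree $T^{\star}-1$.

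Finally, since $\mathcal{F}$ and $\{1,\dots,T\}$ are finite, the construction yields only finitely many nonzero polynomials in $\beta$, whose combined root set is finite. I would then pick $\gamma\in(0,\epsilon)$ such that $\beta=1-\gamma\eta$ avoids this finite set and $\gamma\eta\ne 0$, giving the required $\alpha$ and $\gamma$. The main (mild) obstacle lies in the previous paragraph: ensuring that, after cancellations, the leading coefficients of the difference polynomials really are nonzero. For property \cref{it:xts1} this is exactly the role of the generic choice of $\alpha$, and for property \cref{it:xts2} it is ensured by the strict inequality $t<T^{\star}$ together with the positivity of $\alpha$.
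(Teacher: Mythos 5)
Your proof is correct and follows the same overall strategy as the paper: unroll the recursion into a polynomial in $\beta=1-\gamma\eta$, pick $\alpha$ generically, and then pick $\gamma$ (equivalently $\beta$) so that finitely many nonzero difference polynomials are all avoided. The bookkeeping differs slightly, and yours is a bit more direct. The paper writes $x_t^\SS$ as $\gamma\eta\sum_{z\in\Z}\alpha(z)\,P_{\SS,t,z}(\beta)$ with $P_{\SS,t,z}$ having \emph{rational} coefficients, takes $r=\gamma\eta$ rational and avoiding the roots of $P_{\SS,t,z}-P_{\SS',t',z'}$, and chooses $\{\alpha(z)\}$ $\mathbb{Q}$-linearly independent; a collision $x_t^\SS=x_{t'}^{\SS'}$ then forces $P_{\SS,t,z}(1-r)=P_{\SS',t',z}(1-r)$ for each $z$, and hence equality of the polynomials. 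You instead index the unrolling by the time step $s$, so the coefficients are $\alpha(S_s)$ directly; this lets you get away with the strictly weaker requirement that $S\mapsto\alpha(S)$ be injective on the (finite) family of subsamples, after which the first index of disagreement immediately exhibits a nonzero coefficient, and no rationality of $r$ is needed. The arguments are isomorphic in content (both rest on degree counting and the leading-coefficient observation for the suffix-average), but yours avoids the detour through rational independence. One point to keep in mind, which applies equally to the paper's proof: $\alpha(S)$ only depends on $S$ as a multiset, so the equality $\alpha(S)=\alpha(S')$ is identically satisfied (not a proper hyperplane) when $S,S'$ are distinct as tuples but agree as multisets; the relation $\equiv$ and the injectivity requirement should therefore be read at the level of multisets.
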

    The $x_t^\SS$ represents the different states the trajectory can be in. $x_q^\SS$ represent the output of the trajectory which can be an aggreagated sum. We require that each provide a signature for the state, and this will allow us to ``code" the state of the trajectory along GD.

    We now continue with the proof of \cref{lem:main}. For every sample dependent first order oracle $\OS{t}$, we have that:
    \begin{equation}\label{eq:subgradsum} \OS{t}(\SS;w) = \frac{1}{|S_t|}\sum_{z\in S_t} \OS{t}(S_{1:t-1};w,z).\end{equation}
   
To simplify notations, let us denote:
    \[\OS{t}(S_{1:t-1},w_t^\SS,z) = \O_{\SS,t,z},\]
as $\SS,t,z$ completely determine the output. We next define \[h_z(x) := \frac{1}{2}\gamma (x^2-  2\alpha(z)x) ,\]
where $\gamma>0$ is arbitrarily small\footnote{$\gamma \le \epsilon/(\eta T)$, will suffice}.

\ignore{Observe that
\begin{equation}\label{eq:hdynamic} x_{t}^\SS = x_{t-1}^\SS - \frac{\eta }{|S_{t}|}\sum_{z\in S_{t}}\nabla h_z(x_{t-1}^\SS) = (1-\gamma \eta)x_{t-1}^\SS -\gamma\eta \alpha(S_t).\end{equation} 
In particular, by \cref{clm:xtrajectory}, if $x_0^\SS=0$:
\begin{equation}\label{eq:clmreiterated} (\SS,t)\not\equiv (\SS',t') \Leftrightarrow ~ x_{t}^\SS \ne x_{t}^{\SS'} ,\end{equation}}
and observe that, if $x_t^\SS$ is defined as in \cref{eq:xts}:
\begin{equation}\label{eq:hdynamic} x_{t}^\SS = (1-\gamma \eta)x_{t-1}^\SS-\gamma\eta \alpha(S_t) = x_{t-1}^\SS- \frac{\eta}{|S_t|} \sum_{z\in S_t}\nabla h_z(x_{t-1}^\SS).\end{equation}
To simplify notation, let us denote:
\[ w_{T+1}^\SS = w_q^\SS, ~\textrm{and}~ x_{T+1}^\SS= x_q^\SS,\]
and assume without loss of generality that $\max\{q(i) \ne 0\}= T$ (Otherwise, we look only at the sequence up to point $\max\{q(i) \ne 0\}$). Consider now the sets of triplets:
    \[ G(z) = \left\{ (v,g,u)= \left( f(w_t^\SS,z)+h_z(x_t^\SS), (\O_{\SS,t,z},\nabla h_z(x_t^\SS)), (w_t^\SS,x_t^\SS)\right) :  \SS_T^m\in \S, t\le T+1\right\},\]
where  $\O_{\SS,T+1,z} \in \partial (f(w_{\q}^\SS)+h_z(x_{\q}^\SS))$ is chosen arbitrarily.

Convexity of $f+h_z$ ensure that the triplets in $G(z)$ satisfy \cref{eq:ineq} for all $t\le T+1$, as in \cref{lem:interpolation}. To apply the Lemma, we also want to achieve differentiability at points such that $t< T$. Therefore, take any two triplets \[(v_i,g_i,u_i) =\left( f(w_{t_i}^\SS,z)+h_z(x_{t_i}^\SS), \left(\O_{\SS,t,z},\nabla h_z(x_{t_i}^\SS)\right),\left(w_{t_i}^\SS,x_{t_i}^\SS\right)\right), i={1,2},\]
where $t_1 <T$ and $t_2\le T+1$, and suppose $g_1\ne g_2$. To simplify notations, let us write $w_{t_i}^\SS = \wz{i}$ and $x_{t_i}^\SS=\xz{i}$.

First, by convexity of $f$ we have that:
\begin{align*}
v_1-v_2+g_2^\top(u_2-u_1)=
&\f{1}+\h{1} - \f{2}-\h{2} - \g{2}^\top\left(\u{1}-\u{2}\right)\\
= &\f{1}-\f{2}-\oz{2}^\top\left(\wz{1}-\wz{2}\right)+\h{1}-\h{2} - \hz{2}^\top\left(\xz{1}-\xz{2}\right)\\
\ge&
\h{1}-\h{2} -\hz{2}^\top\left(\xz{1}-\xz{2}\right).
\end{align*}

Next, because $g_1\ne g_2$, either 
$\hz{1}\ne \hz{2}$, which implies 
$x_1
\ne x_2$, or $\O_{\SS_{1},t_1,z}\ne \O_{\SS_{2},t_2,z}$ which implies $(\SS_1,t_1)\not\equiv (\SS_2,t_2)$ which again implies $x_1 \ne x_2$ by \cref{clm:xtrajectory}. In other words, if $g_1\ne g_2$ then $\xz{1}\ne \xz{2}$:

\begin{align*}
\h{1}-\h{2} -\hz{2}^\top\left(\xz{1}-\xz{2}\right)&= \gamma\left(\xz{1}^2-2\alpha(z)\xz{1}-\xz{2}^2-2\alpha(z)\xz{2}-(2\xz{1}-2\alpha(z))\cdot(\xz{1}-\xz{2})\right)  \\
& = \gamma\left(\xz{1}^2 + \xz{2}^2 - 2\xz{1}\cdot \xz{2} \right)\\
& = \gamma\left(\xz{1}-\xz{2}\right)^2\\
& > 0 &\xz{1}\ne \xz{2}
\end{align*}

We showed then, that $v_1-v_2+g_2^\top (u_2-u_1)=0$, implies $g_1=g_2$.
We obtain, by \cref{lem:interpolation}, that there exists a function $\bar f((w,x),z)$ such that for all $t$ and $\SS$:
\begin{equation}\label{eq:prv1} \bar f(w_t^\SS,x_t^\SS,z)= f(w_t^\SS,z)+h_z(x_t^\SS)
\end{equation}
and for all $t\le T$,
\begin{equation}\label{eq:prv2}
\nabla  \bar f((w_t^\SS,x_t^\SS)) = (\O_{\SS,t,z},\nabla h_z(x_{t}^\SS))
.\end{equation}

This proves \cref{lem:reduce}.
Indeed. By the Lipschitzness of $h$ in the unit ball, we have that $|x_t|\le \gamma \eta T$. For sufficiently small $\gamma$, from \cref{eq:prv1}, since $\{(f(0,z)+h_z(0),(0,\nabla h_z(0)),(0,0))\}\in G(z)$, we obtain that 
\[|\bar f((0,0),z)- f(0,z)|= \gamma |h_z(0)|\le \gamma^2 \eta T\le \epsilon.\]
\[|\bar f((w_{\q}^\SS,x_{\q}^\SS),z)- f((w_{\q}^\SS,z))|= \gamma |h_z(x_{\q}^\SS)|\le \epsilon.\]
Further, if we assume by induction that for every $t'\le t-1$, $u_{t'}=w^{\SS}_{t'}$, and $x_{t'}=x_{t'}^\SS$, then from \cref{eq:prv2} we have that for any first order oracle:
\begin{align*} (u_t,x_t) &= (w_{t-1}^\SS,x_{t-1}^\SS)-\frac{\eta}{|S_t|}\sum_{z\in S_t} \O_z(w^\SS_{t-1},x_{t-1}^\SS)
\\
&= (w_{t-1}^\SS,x_{t-1}^\SS)-\frac{\eta}{|S_t|}\sum_{z\in S_t}\left(\O_{\SS,t,z},\nabla h_z(x_{t-1}^{\SS},z)\right)
\\
&= (w_{t-1}^\SS,x_{t-1}^\SS)-\frac{\eta}{|S_t|}\sum_{z\in S_t}\left(\OS{t}(S_{t-1},w^\SS_{t-1},z),\nabla h_z(x_{t-1}^{\SS},z)\right)
\\
& = (w_{t-1}^\SS,x_{t-1}^\SS)-\eta \left(\O(\SS,w^\SS_{t-1}),\frac{1}{|S_t|} \sum_{t=1}^T \nabla h_z(x_{t-1}^{\SS},z)\right)
\\
\\
&= \left(w_{t-1}^\SS - \eta O(\SS,w_{t-1}^\SS), x_{t-1}^\SS - \frac{\eta}{|S_t|}\sum \nabla h_z(x_{t-1}^\SS)\right) \\
& = (w_t^{\SS},x_{t}^\SS), &\cref{eq:hdynamic}
\end{align*}
which proves, by linearity, that $u_{\q} = w_{\q}^\SS$.

\paragraph{Ackgnoweledgments}
The author would like to thank Tamar Livni for creating Figure \ref{fig:nemonster}. Tamar holds all copyrights to the artwork. The author would also like to thank Tomer Koren and Yair Carmon for several discussions. This research was funded in part by an ERC grant (FOG, 101116258), as well as an ISF Grant (2188 $\backslash$ 20).
 \bibliographystyle{abbrvnat}
 \bibliography{bibbop}
 
\appendix
\section{Proof of \cref{lem:interpolation}}\label{prf:interpolation}
We choose
\[ \hat f(w) = \max_{j\in J}\{f_j+ g_j^\top(w-w_j\}.\]
$\hat f$ is indeed convex as it is the maximum of linear functions. Further, it is known \cite{rockafellar2015convex} that at each point $w$:
\begin{equation}\label{eq:conpartial} \partial \hat f(w) = \mathrm{conv}\{g_j: \hat f(w)= f_j+ g_j^\top(w-w_j)\}.\end{equation} It follows that, $\hat f$ is $L$-Lipschitz. Next, for any $w_i$, notice that our assumption implies:
\[ f_i \ge \max_{j\in J}\{f_j+ g_j^\top(w_i-w_j\}= \hat f(w_i).\] On the other hand,
\[f_i = f_i + g_i^\top(w_i-w_i)\le \hat f(w_i).\]
Hence $\hat f(w_i)= f_i$. Finally, to see the function is differentiable at designated points, take any $i\in I_{diff}$ and consider $w_i$. By \cref{eq:conpartial}, it is enough to show that if $g_j \in \partial f(w_i)$ then $g_i = g_j$, but this clearly follows from our assumption, and the fact that $\hat f(w_i)=f_i$.

\section{Proof of \cref{clm:xtrajectory}}\label{prf:xtrajectory}

We first prove by induction that $x_0^\SS=0$ and, for $t\ge 1$:
\begin{equation}\label{eq:xst}
    x_t^{\SS} =  \gamma \eta\sum_{z\in Z}\alpha(z)\left(\sum_{\{t'\le t,z\in S_{t'}\}} \frac{(1-\gamma\eta)^{t-t'}}{|S_{t'}|}\right) .
\end{equation}
Indeed,
\begin{align*}
    x_t^{\SS}&= x_{t-1}^{\SS} -\gamma\eta \left(\eta x_{t-1}^{\SS}- \alpha(\SS_t)\right) \\
    &= (1-\gamma \eta) x_{t-1}^\SS + \gamma \eta \alpha(\SS_t)
    \\
    &= (1-\gamma \eta)\left(\sum_{z \in Z} \alpha(z) \sum_{\{t'<t : z\in S_{t'}\}}\frac{(1-\gamma\eta)^{t-1-t'}}{|S_{t'}|} \eta \gamma \right)+ \frac{1}{|S_t|}\sum_{z\in S_t}\gamma \eta \alpha(z)\\
    &= \sum_{z \in Z} \alpha(z) \sum_{\{t'< t : z\in S_{t'}\}}\frac{(1-\gamma\eta)^{t-t'}}{|S_{t'}|} \eta \gamma + \frac{\mathbf{1}[ z\in \SS_t]}{|S_t|}\gamma \eta\\
       &=  \eta \gamma  \sum_{z \in Z}\alpha(z)  \sum_{\{t'\le t : z\in S_{t'}\}}\frac{(1-\gamma\eta)^{t-t'}}{|S_{t'}|}
\end{align*}

Now, for every $\SS, t, z$, define a polynomial:

\[ P_{\SS,t,z}(X) = \sum_{n=0}^{t-1} \frac{\mathbf{1}[z\in S_{t-n}]}{|S_{t-n}|}X^{n},\]
and let $r$ be a rational point, sufficinetly small, so that $1-r$ is \emph{not} the root of any polynomial of the form $P_{\SS,t,z}-P_{\SS',t',z'}$ that is distinct from $0$. In other words, we choose $r$ so that

\[ P_{S,t,z}(1-r) = P_{S',t',z'}(1-r) \Leftrightarrow  P_{S,t,z}(X) = P_{S',t',z'}(X).\]
and we also require that
\[ \sum_{t=1}^T q(t) P_{S,t,z}(1-r) = P_{S',t',z}(1-r) \Leftrightarrow  \sum_{t=1}^T q(t) P_{S,t,z}(X) = P_{S',t',z'}(X).\]
Notice that there are only finitely many polynomials of the above form, hence we can choose such $r$ in any interval $(0,\epsilon)$ for any $\epsilon>0$.
Rewriting \cref{eq:xst} we have:
\[ x_t^\SS = \gamma\eta \sum_{z\in Z} \alpha(z)P_{\SS,t,z}\left(1-\gamma\eta\right)\]

Now, suppose we choose $\{\alpha(z)\}$ to be reals, independent over the rationals, and suppose we choose $\gamma = r/\eta$. 

\paragraph{Proof of \cref{it:xts1}}
Assume that $x_t^\SS = x_{t'}^{\SS'}$. Because $\alpha(z)$ are independent over the rationals, and because $P(1-\gamma\eta)$ are always rationals, we have that $P_{\SS',t',z}(1-r) = P_{\SS',t',z}(1-r)$ for every $z$. But then, \[ \forall z\in Z: P_{\SS,t,z}(X)= P_{\SS',t',z}(X),\] by choice of $r$.
But then, $t=t'$. Indeed, assume by contradiction and w.l.og assume $t< t'$: then for any $z\in S'_1$ $P_{\SS,t,z}$ is a $t'-1$-degree polynomial, on the other hand $P_{\SS,t,z}$ is at most of degree $t-1< t'-1$. 
Next, if $S_i\ne S'_i$, for $i\le t$, then we can assume (w.l.o.g) that there is $z\in S_i$ such that $z\notin S'_i$ it follows that, by looking at the coefficient of the two polymials of the monomial $X^{t-i}$ we have that:
\[ P_{\SS',t',z}(X) \ne  P_{\SS,t,z}(X).\]

Overall then, we obtain that if $x_{t}^\SS = x_{t'}^{\SS'}$ then $(\SS,t)\equiv (\SS',t').$.

\paragraph{Proof of \cref{it:xts2}:} The proof of \cref{it:xts2} is similar to how we proved $t=t'$. Notice that:
\[ \sum_{t=1} \q(t) x_t^{\SS} = \gamma\eta \sum_{z\in Z} \alpha(z) \sum_{t=1}^T \q(t) P_{\SS,t,z}(1-\eta\gamma).\] 
Hence $x_t^{\SS}= x^{\SS'}$ implies for all $z\in Z$:
\[ \sum_{t=1}^T \q(t) P_{\SS',t,z} = P_{\SS,t,z}.\] 
For $z\in S'_1$ we have that $\sum_{t=1}^T \q(t) P_{\SS,t,z}$ is a $\max\{t: \q(t)\ne 0\}$-degree polynomial, but on the other hand we assume that $t<\max\{t: \q(t)\ne 0\}$, hence $P_{\SS,t,z}$ is a lower degree polynomial.

\section{Proof of \cref{eq:nemonster,eq:decreasing}}\label{prf:nemonster}
\subsection{Proof of \cref{eq:nemonster}}
To avoid cumbersome notations, we will supress dependence on $\SS$ and write $w_t$ instead of $w_t^\SS$.

As a first step, observe that at each iteration no projection is performed. Indeed, let us show by induction that:
\[ \|w_{t+1}\|^2=\|w_t-\eta \OS{t}(S,w,V)\|^2 \le 2\eta^2\alpha^2 (t+1).\]
The definition of $\alpha$ then implies that $w_t$ are restricted to the unit ball without projections.

To see the above is true, let us consider the case where the first type of update is performed:
\begin{align*}\|w_{t+1}\|^2&= \|w_{t}^\SS-\eta \OS{t}(S,w,V)\|^2\\
&=\|w_{t}^\SS-\eta\alpha e_{I_{j+1}} +\eta\alpha e_{I_j}\|^2\\
&= \sum_{s=1}^{j-1} (w_t(I_s))^2 +
(w_t(I_{j})+\eta\alpha)^2 +(w_t(I_{j+1})-\eta\alpha)^2
+\sum_{s=j+2}^{\floor{d'/k}} (w_t(I_s))^2\\
&=\sum_{s=1}^{\floor{d'/k}} (w_t(I_s))^2+2\eta\alpha(w_t(I_j)-w_t(I_{j+1}))+2\eta^2\alpha^2\\
&= \sum_{s=1}^{\floor{d'/k}} (w_t(I_s))^2+2\eta^2\alpha^2 & w_t(I_{j+1})=w_t(I_{j})\\
&\le 2\eta^2\alpha^2 \cdot t+ \eta^2\alpha^2\\
&=2\eta^2\alpha^2 \cdot (t+1).
\end{align*} 
And if the second type of update is performed:
\begin{align*}\|w_{t+1}\|^2&= \|w_{t}^\SS-\eta \OS{t}(S,w,V)\|^2\\
&=\|w_{t}^\SS+\eta\alpha e_{I_j}\|^2\\
&= \sum_{s\ne j} w_t(I_s) + \eta^2\alpha^2 & w_t(I_j)=0\\
& \le 2\eta^2\alpha^2\cdot t +\eta^2\alpha^2\\
& \le 2\eta^2\alpha^2\cdot (t+1)
\end{align*}
We now move on to prove that \cref{eq:nemonster} holds by induction. Specifically, we will show that for $d_0 \le \floor{d'/k}$ that at time $T_{d_0} = 1+\sum_{d=0}^{d_0} \sum_{k=0}^{d} k$, for any $t\le d_0$:

\begin{equation}\label{eq:induction} w_{T_{d_0}}(I_t) =\alpha\eta \begin{cases}
    (d_0 + 1 -t) & t\le  d_0\\
    0 & \textrm{o.w.}
\end{cases} .\end{equation}
\cref{eq:nemonster} then follows by plugging $d_0 = d'/k$, and noting that at every step we have that if $i\in I_t$ then $w_{T_{d_0}}(i)=\sqrt{k} w_{T_{d_0}}(I_t)$. Therefore, we are left with proving \cref{eq:induction}.

For $d_0=0$, $T_0=1$ and we have, indeed, $w_1=0$. Next assume we proved the statement for $d_0$, and we will prove it for $d_0+1$. Here too, we will use induction, and we prove that, for $d_1\le d_0+1$, at time \[T_{d_0,d_1}= T_{d_0} + \sum_{k=0}^{d_1-1} (d_0+1-k),\] we have that:

\[ w_{T_{d_0,d_1}}(I_t) = \alpha\eta \begin{cases}
    (d_0+2-t) & t\le d_1\\
    (d_0+1-t) & d_1< t\le  d_0\\
    0& \textrm{o.w.}
\end{cases}\]
For the case $d_1$=0, $T_{d_0,d_1}=d_0$, and it follows from our outer-induction step. Assume the statement is true for $d_1$ and we will prove it for $d_1+1$, here, yet again, we use induction. And we will show that for $1 \le d_2< d_0+2-d_1$ we have at time \[T_{d_0,d_1,d_2}=T_{d_0} + T_{d_1} + d_2,\] we have that:

\begin{equation}\label{eq:sksd} w_{T_{d_0,d_1,d_2}}(I_t) = \alpha\eta \begin{cases}
    (d_0+2-t) & t \le  d_1\\
    (d_0 +1 -t) & d_1 < t < d_0 + 2 -  d_2\\
    (d_0+2 -t) & t= d_0 + 2 - d_2 \\
    (d_0+1-t) & d_0+2-d_2 < t\le  d_0\\
    0 & \textrm{o.w.}
\end{cases}\end{equation}
We start the induction with the case $d_2=1$, in that case notice that $T_{d_0,d_1,d_2}= T_{d_0,d_1}+1$, and by induction hypothesis:
\[ w_{T_{d_0,d_1}}(I_t) = \alpha\eta \begin{cases}
    (d_0+2-t) & t\le d_1\\
    (d_0+1-t) & d_1< t\le  d_0\\
    0 & \textrm{o.w.}
\end{cases}\]
In this case, note that there are no two consecutive coordinates that are equal, hence our choice of oracle is defined so that $\OS{t} = - e_{I_{d_0+1}}$. Hence, by our update rule (and the lack of projections which we proved at the beginning):
\[ w_{T_{d_0,d_1,1}}(I_t) = \alpha\eta \begin{cases}
    (d_0+2-t) & t\le d_1\\
    (d_0+1-t) & d_0< t\le  d_0\\
    1 & t=d_0+1\\
    0 & \textrm{o.w.}
\end{cases}\]
Which satisfies \cref{eq:sksd}. Now assume that \cref{eq:sksd} holds for $d_2$, and take $d_2+1 < d_0+2-d_1$ (otherwise, we are done). Notice that $T_{d_0,d_1,d_2+1} = T_{d_0,d_1,d_2}+1$. Observe that $w_{T_0,d_1,d_2}(d_0+2-d_2) =w_{T_0,d_1,d_2}(d_0+1-d_2)$ (notice that $d_0+1-d_2>d_1$), and our update rule is such that $\OS{t}= e_{I_{d_0+2-d_2}}- e_{I_{d_0+1-d_2}}$ and we obtain then:
\[ w_{T_{d_0,d_1,d_2+1}} =w_{T_{d_0,d_1,d_2}} - \eta \alpha e_{I_{d_0+2-d_2}} + \eta \alpha e_{I_{d_0+1-d_2}}
  = \alpha\eta \begin{cases}
    (d_0+2-t) & t \le  d_1\\
    (d_0 +1 -t) & d_1 < t < d_0 + 2 - ( d_2+1)\\
    (d_0+2 -t) & t= d_0 + 2 - (d_2+1) \\
    (d_0+1-t) & d_0+2-(d_2+1) < t\le  d_0\\
    0 & \textrm{o.w.}
\end{cases}\]
The most inner induction step is now complete. We now notice that $T_{d_0,d_1,d_0+1-d_1}= T_{d_0,d_1+1}$, which proves the middle-induction step. And we notice that $T_{d_0,d_0+1}= T_{d_0+1}$, which proves the whole induction argument.
\subsection{Proof of \cref{eq:decreasing}}
We only need to show that the following quantity is increasing in $t$

\[X_t= \max\left\{\sum_{i\in I_B} w_t^\SS(i): I_B\subseteq I, |I| = B\right\}.\]

But, as shown in \cref{prf:nemonster}, the update rule is such that we don't perform projections. It then follows easily from our update step. Indeed if we increase a set of coordinates by $\alpha \eta$, then clearly the $X_t$ only increases. Also, if we perform update of the form:
\[ w_t^\SS(I_j)=w_{t-1}^\SS(I_{j})+\alpha \eta,\quad  w_t^\SS(I_{j+1})=w_{t}^\SS(I_{j+1})-\alpha \eta,\]
for two consecutive and equal coordinates, then for any $I_B$: If $I_B$ includes same number of coordinates from $I_j$ as in $I_{j+1}$ then the magnitude doesn't change. If $I_B$ contains more $I_{j}$ then it increases, and if $I_B$ contains more from $i_{j+1}$ then consider $I_{B'}$ that swaps coordinates in $I_{j}$ with $I_{j+1}$ then we clearly have:

\[ \sum_{i\in I_{B'}} w_{t}^\SS(i)>\sum_{i\in I_{B'}} w_{t-1}^\SS(i) > \sum_{i\in I_{B}} w_t^\SS(i) 
\]

\section{Dimension independent lower bound for GD}\label{apx:oplow}
In this section we prove that the optimization error of GD in \cref{eq:gd_empirical} is optimal. The lower bound is an optimization error for first order methods and is well established (see \cite{bubeck2015convex}). The point here is to show that the bound is valid in any dimension, for GD.
\begin{claim}\label{cl:oplow}
    For every choice of $\eta, T$, there exists a convex and $1$ Lipschitz function $f(x):\mathbb{R}\to \mathbb{R}$ such that, if we run GD on $f$, then 
    \[ f(w^{GD}) - f(0) \ge \frac{\eta}{2} + \frac{1}{6\eta T}.\]
\end{claim}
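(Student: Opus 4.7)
The plan is to construct explicit one‑dimensional $1$‑Lipschitz convex functions on the unit ball whose GD trajectory can be tracked by hand, and to read off the two contributions of the claimed lower bound directly from the trajectory.

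The primary construction is $f(x) = |x|$, which has minimum $0$ at $x=0$. I specify the first‑order oracle to return $+1$ at the kink (any element of $\partial f(0)=[-1,1]$ is admissible, and this choice forces GD to move in the ``wrong'' direction). Starting from $w_0=0$, GD then produces $w_1=-\eta$, $w_2=0$, $w_3=-\eta$, and so on, alternating between $0$ and $-\eta$. Summing over $t=1,\ldots,T$ gives $w^{GD}=-\eta/2$ when $T$ is even and $w^{GD}=-\eta(T+1)/(2T)$ when $T$ is odd; either way $f(w^{GD})-f(0)\ge \eta/2$, with a parity bonus of $\eta/(2T)$ for odd $T$. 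Since $\eta/(2T)\ge 1/(6\eta T)$ precisely when $\eta^{2}\ge 1/3$, this single construction already yields the full additive bound $\eta/2+1/(6\eta T)$ in the regime of ``large'' $\eta$.

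For the complementary regime of small $\eta$, where the $1/(6\eta T)$ term is the more restrictive one, I would use a shifted construction $f(x)=|x-a|$ with $a=(k+\tfrac12)\eta$ and $k$ the largest integer keeping $a\le 1$ (so $k=\Theta(1/\eta)$). Starting from $0$, GD moves monotonically $w_t=t\eta$ for $t\le k$ and then oscillates between $k\eta$ and $(k+1)\eta$. A direct calculation gives
\[
w^{GD} \;=\; \eta\bigl(k+\tfrac12\bigr)\;-\;\frac{\eta k^{2}}{2T},
\]
so $w^{GD}$ undershoots the minimizer $a$ by $\eta k^{2}/(2T)=\Theta(1/(\eta T))$, and after aligning the reference point appropriately (or arguing against $\min f$) this produces the required $\Theta(1/(\eta T))$ excess.

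The main obstacle is fusing both contributions into one statement of the additive form $\eta/2+1/(6\eta T)$. My plan is to handle this by a case split at the threshold $\eta=1/\sqrt{3}$ (or more cleanly at $\eta=1/\sqrt{T}$): in each regime one of the two terms dominates and the corresponding construction already satisfies the full sum bound up to the explicit constants $\tfrac12$ and $\tfrac16$, which fall out of the trajectory arithmetic. If a single unified function is preferred, I would take the pointwise maximum of the two building blocks, $f(x)=\max(|x|,|x-a|)$, which is again $1$‑Lipschitz and convex; the GD trajectory on the hybrid inherits both the oscillation near $0$ (contributing $\eta/2$) and the slow climb toward $a$ (contributing $1/(\eta T)$), and the cases can then be merged into one computation.
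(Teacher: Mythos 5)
Your plan follows the paper's proof almost exactly: a case split on whether $\eta$ or $1/(\eta T)$ dominates, with an absolute value function and an adversarial subgradient choice at the kink driving an oscillating GD trajectory in the first regime, and a function whose minimizer is roughly at distance $\Theta(1)$ from the origin in the second. The only structural difference is that in the small-$\eta$ regime the paper takes a linear function $f(x)=\alpha x$ on $[-1,1]$ (so the minimizer sits at the boundary $x=-1$ and the trajectory simply undershoots it), while you use a shifted absolute value $|x-a|$ whose minimizer is interior; both yield the same $\Theta(1/(\eta T))$ gap and the computations are comparable in difficulty.

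There are two things to be upfront about. First, as you yourself flag, your second construction does \emph{not} establish the claim in the form $f(w^{GD})-f(0)\ge\cdots$: with $f(x)=|x-a|$ one has $f(0)=a\approx 1$ and $f(w^{GD})\approx 1/(\eta T)$, so $f(w^{GD})-f(0)<0$. The paper silently has the same feature: in its case $\eta\le 1/(\eta T)$ it compares against $f(-1)=\min_{w\in[-1,1]}f(w)$, not $f(0)$. So the baseline $f(0)$ in the stated claim should really be $\min_{w\in\W}f(w)$, and once that substitution is made both your construction and the paper's work. Second, your first construction alone does not reach the additive bound $\eta/2+1/(6\eta T)$ for even $T$: there $f(w^{GD})-\min f=\eta/2$ exactly, which is strictly below the target. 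You rely on the odd-$T$ ``parity bonus'' of $\eta/(2T)$, but for even $T$ the gap remains. Here too the paper has the same defect (its case $\eta\ge 1/(\eta T)$ produces at most $\eta/2$, which does not exceed $\eta/2+1/(6\eta T)$ for any parameter values, and does not even match its own intermediate bound $\eta/4+1/(2\eta T)$ unless $\eta^2 T\ge 2$); the stated constants and the $f(0)$ baseline appear to be inaccuracies in the claim rather than something either proof actually achieves, and the intended statement is the order-level bound $f(w^{GD})-\min_{\W}f=\Omega(\eta+1/(\eta T))$, which both your constructions do deliver.
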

\begin{proof}
    We divide the proof into two cases:

    \paragraph{case1: $\eta \ge 1/\eta T$}:

    In this case we choose $f(x)= |x-\gamma|$, where $\gamma>0$ is a arbitrarily small (may depend on $\eta$). It can be seen that for every even iteration, we have that:
    \[\nabla f(x_{2t}) = -1,\]
    at at every odd iteration 
    \[ \nabla f(x_{2t+1}) =+1.\]
    As such, for every even iteration we have that $x_{2t} = 0$, and at every odd iteration we have that $x_{2t+1}= \eta$. We thus have that $x^{GD}=\frac{\eta}{2}$ and
    \[ f(x^{GD})- 0 = \frac{\eta}{2} - 0 \ge \frac{\eta}{4} + \frac{1}{2\eta T}.\] 
 \paragraph{case1: $\eta \le 1/\eta T\le 1$}:
    In this case choose $f(x) = \alpha x$, where $\alpha = \frac{1}{(T+1)\eta}\le 1.$ Then, one can show that GD outputs 
    \[ x^{GD} = -\frac{\eta}{T}\sum t\cdot\alpha = -\frac{(T+1)\eta}{2} \alpha,\]
    and:

    \[f(x^{GD}) -f(-1) = \alpha - \frac{(T+1)\eta}{2} \alpha^2 \ge  \frac{1}{2(T+1)\eta} \ge \frac{1}{3T\eta} \ge \frac{1}{6T\eta}+\frac{\eta}{2}.\]
\end{proof}
    \end{document}